\documentclass[10pt,letterpaper]{article}
\usepackage[margin=1in]{geometry}
\usepackage{times}
\usepackage{natbib}
\setcitestyle{authoryear,round,semicolon}

\usepackage{amsmath,amsfonts,bm}

\def\eqref#1{equation~\ref{#1}}

\def\1{\bm{1}}

\DeclareMathAlphabet{\mathsfit}{\encodingdefault}{\sfdefault}{m}{sl}
\SetMathAlphabet{\mathsfit}{bold}{\encodingdefault}{\sfdefault}{bx}{n}

\usepackage[utf8]{inputenc}
\usepackage[T1]{fontenc}
\usepackage[colorlinks=true,citecolor=blue,linkcolor=black,urlcolor=blue]{hyperref}
\usepackage{url}
\usepackage{booktabs}
\usepackage{amsfonts}
\usepackage{amsmath}
\usepackage{amssymb}
\usepackage{amsthm}
\usepackage[ruled,linesnumbered]{algorithm2e}
\usepackage{graphicx}
\usepackage{multirow}
\usepackage{subcaption}
\usepackage{makecell}
\usepackage{color}
\usepackage{xcolor}
\usepackage{enumitem}
\usepackage{xspace}

\makeatletter
\renewenvironment{abstract}{%
  \if@twocolumn
    \section*{\abstractname}%
  \else
    \small
    \begin{center}%
      {\bfseries \abstractname\vspace{-.5em}\vspace{\z@}}%
    \end{center}%
    \quote
  \fi}
  {\if@twocolumn\else\endquote\fi}
\makeatother

\newcommand{\safeincludegraphics}[2][]{%
  \IfFileExists{#2}{\includegraphics[#1]{#2}}{%
    \fbox{\parbox[c][0.23\textheight][c]{0.92\linewidth}{\centering
      \textbf{Figure file not supplied}\\[0.5em]
      \texttt{\detokenize{#2}}}}%
  }%
}

\newtheorem{theorem}{Theorem}
\newtheorem{lemma}{Lemma}
\newtheorem{definition}{Definition}

\newcommand{\cs}[1]{{\color{black} #1}}
\newcommand{\ALG}{{PC-SubMax\xspace}}

\title{\ALG: Efficient Prompt Compression via Regularized Submodular Maximization}

\author{
Ziyi Zhang$^{1}$ \quad
Shuang Cui$^{1}$ \quad
Haotian Zhang$^{1}$ \quad
Xiaoyu Wang$^{1}$\\[0.45em]
\small $^{1}$Soochow University (\texttt{zyzhang47@stu.suda.edu.cn})
}
\date{}

\begin{document}

\maketitle

\begin{abstract}
While large language models (LLMs) are increasingly deployed in long-context scenarios, \cs{lengthy prompts can increase inference costs and latency and exacerbate the ``lost-in-the-middle'' phenomenon.} Selective prompt compression offers \cs{a model-agnostic approach to alleviating these issues. However, methods based on fixed token- or sentence-level importance scores may overlook how content contributions change with the selected subset, limiting their ability to account for inter-sentence redundancy. Compression procedures that rely on autoregressive LLM scoring can also introduce substantial overhead.}
\cs{We propose PC-SubMax, a theoretically grounded framework that formulates selective prompt compression as regularized monotone submodular maximization under a knapsack constraint. The objective is $U(S)-\ell(S)$, where the monotone submodular utility $U$ combines information coverage, query relevance, and log-determinant diversity, and the non-negative modular penalty $\ell$ captures token cost. Through diminishing marginal returns, the objective evaluates each sentence's contribution relative to the selected content. To optimize this objective, we develop the Regularized Greedy+Max (RGM) algorithm, which deterministically returns a feasible set $Q$ satisfying $U(Q)-\ell(Q)\geq \frac{1}{2}U(O)-\ell(O)$, where $O$ is an optimal feasible solution to the regularized problem. RGM uses $O(n\kappa)$ value-oracle queries, where $n$ is the number of candidate sentences and $\kappa$ is the maximum feasible subset size. PC-SubMax uses encoder representations and avoids autoregressive LLM scoring during compression. Experiments across seven diverse benchmarks demonstrate competitive downstream performance with low compression overhead.}
\end{abstract}

\section{Introduction}\label{sec:intro}

Large language models (LLMs) have demonstrated exceptional capabilities across a wide array of natural language processing tasks~\citep{ouyang2022training,touvron2023llama}. To tackle complex real-world problems, LLMs are increasingly deployed in long-context scenarios, such as in-context learning (ICL)~\citep{brown2020language}, retrieval-augmented generation (RAG)~\citep{lewis2020retrieval}, and multi-document reasoning~\citep{lin2026docsage}. However, providing LLMs with excessively long prompts presents three \cs{practical} challenges. First, \cs{processing long inputs increases inference cost and latency; in particular, standard dense self-attention incurs quadratic computational cost in the input length during prompt processing}~\citep{vaswani2017attention}. Second, \cs{irrelevant information in lengthy prompts can distract the model and impair reasoning performance}~\citep{shi2023large}. Third, LLMs \cs{can exhibit} the well-documented ``lost-in-the-middle'' phenomenon~\citep{liu2024lost}, where \cs{performance deteriorates when relevant information is located in the middle rather than near the beginning or end of the context}.

\subsection{Related Work} To alleviate these issues, prompt compression has emerged as \cs{a practical approach to reducing input length while retaining useful information}. Existing methods can be broadly categorized based on \cs{their access requirements for the target model} into \textit{white-box} and \textit{black-box} approaches. White-box methods \cs{depend on access to the target model's internal representations or require modifications to its parameters or computation, as in methods that encode context into model-specific soft prompts. Such methods generally cannot be deployed directly through standard text-only APIs. Black-box methods avoid these requirements and produce compressed text that can be supplied to the target model through its ordinary input interface. Two common paradigms for text-based compression are} \textit{generative and selective} \cs{compression}. Generative methods synthesize new text to represent the original content. However, \cs{rewriting can alter or omit fine-grained factual details, including exact numbers needed for mathematical reasoning}. \cs{Selective compression instead retains selected tokens, phrases, or sentences from the original text, preserving source wording while remaining compatible with standard text interfaces}.

\cs{Selective compression methods can be broadly categorized into
\textit{task-agnostic} and \textit{task-aware} approaches.
Task-agnostic methods estimate content importance without conditioning
compression on a specific downstream query. Selective-Context
~\citep{li2023compressing} uses self-information to score lexical units,
while LLMLingua~\citep{jiang2023llmlingua} combines language-model-based
importance estimates with budget allocation and iterative compression.
LLMLingua-2~\citep{pan2024llmlingua2} instead uses a contextual token
classifier trained on distilled retention labels, reducing online
scoring cost. Other methods incorporate richer selection mechanisms:
PartPrompt~\citep{mao2025parse} uses hierarchical linguistic structure
and budgeted tree pruning, while LLM-DPC~\citep{hu2026dynamic} learns
a sequential token-removal policy through reinforcement learning.} \cs{Task-aware methods additionally incorporate query or domain
information. LongLLMLingua~\citep{jiang2024longllmlingua} uses
contrastive perplexity to prioritize question-relevant content.
CPC~\citep{liskavets2025prompt} performs sentence-level compression
by ranking query relevance using a contrastively trained
context-aware encoder. Domain-specific methods such as
ShortenDoc~\citep{yang2026less} and
SR-C3~\citep{atandoh2026less} tailor compression to code generation
and few-shot sentiment analysis, respectively.}

\cs{These methods improve compression through more informative
scores, structural constraints, or learned selection policies.
However, identifying important content is only part of budgeted
compression: the retained units must also form a useful subset, and selecting that subset must also be computationally economical.
Existing selection strategies have three limitations relevant to this problem.}

\begin{enumerate}[leftmargin=*]
    \item \cs{\textit{Incomplete modeling of subset utility.}
    Methods that select content using precomputed retention or
    relevance scores, such as the token-ranking strategy of
    LLMLingua-2~\citep{pan2024llmlingua2} and sentence ranking in
    CPC~\citep{liskavets2025prompt}, can incorporate information
    from the original context. However, their ranking criteria
    do not explicitly evaluate how much additional information
    a candidate contributes after other content has been retained.
    Several highly ranked units may convey overlapping evidence,
    consuming budget that could otherwise preserve complementary
    information. This limits the ability of fixed-score selection
    to balance coverage and redundancy across the selected subset.}

    \item \cs{\textit{Limited optimization guarantees for semantic
    subset selection.}
    Retention prediction and relevance ranking provide practical
    selection rules, but do not by themselves characterize
    solution quality for an explicit objective that jointly models
    information coverage and redundancy. Once these interactions
    are represented at the set level, selection under a token
    budget becomes a combinatorial optimization problem.
    A corresponding approximation analysis is needed to quantify
    the attained objective value relative to an optimal feasible
    subset.}

    \item \cs{\textit{Compression-stage scoring overhead.}
    Language-model-based compressors, such as LLMLingua~\citep{jiang2023llmlingua} and
    LongLLMLingua~\citep{jiang2024longllmlingua}, require additional
    causal-language-model evaluations to score the input during
    compression. These evaluations can introduce substantial
    online overhead for long prompts, reducing the computational
    savings obtained from shorter downstream inputs.
    The cost of the compression procedure therefore remains
    an important consideration alongside the quality of its output.}
\end{enumerate}

\cs{To address these limitations, we propose} \textbf{PC-SubMax}
(\textbf{P}rompt \textbf{C}ompression via \textbf{Sub}modular
\textbf{Max}imization),
\cs{a theoretically grounded framework for sentence-level prompt
compression under a token budget. PC-SubMax models information coverage, semantic diversity,
and query relevance through a normalized monotone submodular
utility. Its coverage and diversity terms assign diminishing
marginal value to information already represented in the selected
set, explicitly accounting for overlap during selection.
A modular token-cost penalty further balances retained utility
against token expenditure within the hard budget, yielding
regularized monotone submodular maximization under a knapsack
constraint. To optimize this objective, we develop the Regularized Greedy+Max (RGM) algorithm with a provable regularized
approximation guarantee. PC-SubMax computes its objective from
pretrained encoder representations and evaluates marginal gains
incrementally, supporting subset-dependent selection without
causal-language-model scoring during compression.
Our key contributions are summarized as follows:}

\begin{itemize}[leftmargin=2em]
    \item \cs{\textbf{Regularized modeling of sentence selection.}
    We formulate selective prompt compression as maximizing $U(S)-\ell(S)$ under a token budget, where $U$ is a normalized monotone submodular utility and $\ell$ is a non-negative modular token penalty. PC-SubMax combines information coverage, log-determinant diversity, and query relevance, with single-hop and multi-hop relevance options. The coverage and diversity terms capture subset-dependent utility, while the penalty provides explicit control over token expenditure within the feasible budget.}

    \item \cs{\textbf{An improved deterministic approximation guarantee.}
    We develop Regularized Greedy+Max (RGM), which uses cost-scaled marginal-density selection and one-item augmentation at every greedy prefix. For any normalized monotone submodular utility $U$, non-negative modular penalty $\ell$, and positive knapsack weights, RGM deterministically returns a feasible set $Q$ satisfying $U(Q)-\ell(Q)\geq \frac{1}{2}U(O)-\ell(O)$ for every feasible comparator $O$, using $O(n\kappa)$ value-oracle queries. Here $n$ is the ground-set size and $\kappa$ is the maximum feasible cardinality. In this monotone-utility setting, RGM improves the $(1/4,1)$ guarantee of PartEnum-Twin-Greedy to $(1/2,1)$ and reduces its $O(n^4)$ query complexity to $O(n\kappa)$~\citep{gong2024budget}.}

    \item \cs{\textbf{Efficient implementation and accelerated selection.}
    PC-SubMax uses pretrained sentence representations and incremental marginal evaluation, requiring neither compression-specific model training nor target-LLM scoring during compression. We further develop a lazy variant of RGM that achieves a $(1/2-\varepsilon,1)$ guarantee with
    $O\!\left((n/\varepsilon)\left[1+\log\!\left(B/(\varepsilon c_{\min})\right)\right]\right)$
    value-oracle queries, where $0<\varepsilon<1/2$, $B$ is the budget, and $c_{\min}$ is the minimum individually feasible candidate cost. \cs{The accelerated algorithm and its analysis, together with a detailed breakdown of preprocessing and marginal-evaluation costs, are provided in Appendices~\ref{sec:fast_rgm}--\ref{sec:appendix_complexity}.}}

    \item \cs{\textbf{Evaluation across diverse downstream tasks.}
    Experiments on seven benchmarks spanning summarization, mathematical reasoning, multi-hop question answering, and long-context understanding demonstrate competitive downstream performance against five representative prompt-compression baselines. PC-SubMax achieves the lowest measured compression time on four of the six benchmarks used for timing evaluation. The compressed prompts consist of original sentences and can be supplied directly to target LLMs through standard text interfaces.}
\end{itemize}

\cs{Complete proofs of our lemmas and theorems are provided in} Appendixs~\ref{sec:appendix_proofs}--\ref{sec:appendix_complexity}.

\section{Problem Formulation}\label{sc:problem}
We begin by introducing the formal definitions of submodularity and the prompt compression problem, and then cast selective prompt compression as regularized \cs{monotone} submodular maximization subject to a knapsack constraint.

\begin{definition}[Submodular Function]
\label{def:submodular}
A set function $f: 2^{\mathcal{V}} \to \mathbb{R}$ defined on a \cs{finite} ground set $\mathcal{V}$ is \textit{submodular} if and only if for all $A \subseteq B \subseteq \mathcal{V}$ and \cs{$e \in \mathcal{V}\setminus B$}: $f(A \cup \{e\}) - f(A) \geq f(B \cup \{e\}) - f(B)$.
\end{definition}

\cs{
We write $f(e\mid S)=f(S\cup\{e\})-f(S)$ for the marginal gain of adding $e$ to $S$. A set function is \textit{normalized} if $f(\emptyset)=0$ and \textit{monotone} if $f(A)\leq f(B)$ whenever $A\subseteq B$. The modular penalties considered here are additive functions of the form $\ell(S)=\sum_{e\in S}\ell_e$, with $\ell_e\geq 0$.
}

\begin{definition}[\cs{Sentence-Level Selective Prompt Compression}]
\label{def:compression}
\cs{
Given a prompt $\mathcal{D}=(s_1,\ldots,s_n)$ and a token budget $B>0$, sentence-level selective prompt compression chooses a subset $S\subseteq\mathcal{V}=\{s_1,\ldots,s_n\}$ and concatenates the selected sentences in their original order to form a compressed prompt $\mathcal{D}'$. Sentence occurrences are distinguished by their positions, even when their text is identical. The goal is to retain information useful for downstream inference within the available budget.
}
\end{definition}

\cs{
Evaluating downstream task performance during subset selection would require repeated target-LLM calls and a task-specific evaluation signal. We therefore construct an efficiently evaluable surrogate from information coverage, semantic diversity, and query relevance. For each compression instance, the original prompt and any associated query are fixed, and the selected sentence subset is the optimization variable. We formalize the model as follows:
}

\begin{itemize}[leftmargin=2em]
    \item \textbf{Knapsack Constraint (Token Budget):}
    \cs{
    We adopt an additive token-cost model in which each sentence $s_i$ has a fixed cost $c_i=c(s_i)>0$. The budget $B$ is allocated to the selectable context after accounting for any fixed prompt components. A selected subset is feasible if
    $c(S)=\sum_{s_i\in S}c_i\leq B$.
    }

    \item \textbf{Regularized Submodular Surrogate:}
    We design a \cs{normalized,} non-negative monotone submodular semantic utility $U(S)$ and subtract a non-negative modular token price $\ell(S)$.
    \cs{
    The resulting objective $G(S)=U(S)-\ell(S)$ balances semantic utility against token expenditure. The hard budget determines admissible subsets, while the penalty controls the cost of using tokens within that budget. Our coverage and diversity terms represent information overlap through diminishing marginal returns. The specific constructions of $U$ and $\ell$ are given in Section~\ref{sc:method}.
    }
\end{itemize}

\cs{
The resulting optimization problem is regularized monotone submodular maximization under a knapsack constraint:
}
\begin{equation}
\cs{
\begin{aligned}
    \max_{S\subseteq\mathcal{V}}\quad
        & G(S)=U(S)-\ell(S)\qquad
    \text{s.t.}\quad
        & c(S)\leq B.
\end{aligned}
}
\label{eq:knapsack}
\end{equation}

\cs{
Here monotonicity refers to $U$. Subtracting the modular penalty preserves submodularity, but the net objective $G$ can be non-monotone. The general formulation permits the knapsack weights $c_i$ and penalty coefficients $\ell_i$ to vary independently; PC-SubMax instantiates the penalty as $\ell_i=\lambda_{\mathrm{tok}}c_i/B$, where $\lambda_{\mathrm{tok}}\geq 0$.
} \cs{
This problem class is NP-hard, since it includes cardinality-constrained maximum coverage as a special case when $\ell\equiv 0$ and all costs are equal. We therefore seek efficient algorithms with regularized approximation guarantees. Let $O$ be an optimal feasible solution to~\eqref{eq:knapsack}. An $(\alpha,1)$ regularized approximation, with $\alpha\in[0,1]$, returns a feasible set $Q$ satisfying $ U(Q)-\ell(Q)\geq \alpha U(O)-\ell(O).$
} \cs{
This guarantee quantifies the utility retained by the algorithm while accounting for the optimal solution's full penalty. The following section presents our objective construction and an algorithm achieving $\alpha=1/2$.
}

\begin{figure}[!htbp]

\centering
\safeincludegraphics[width=\linewidth]{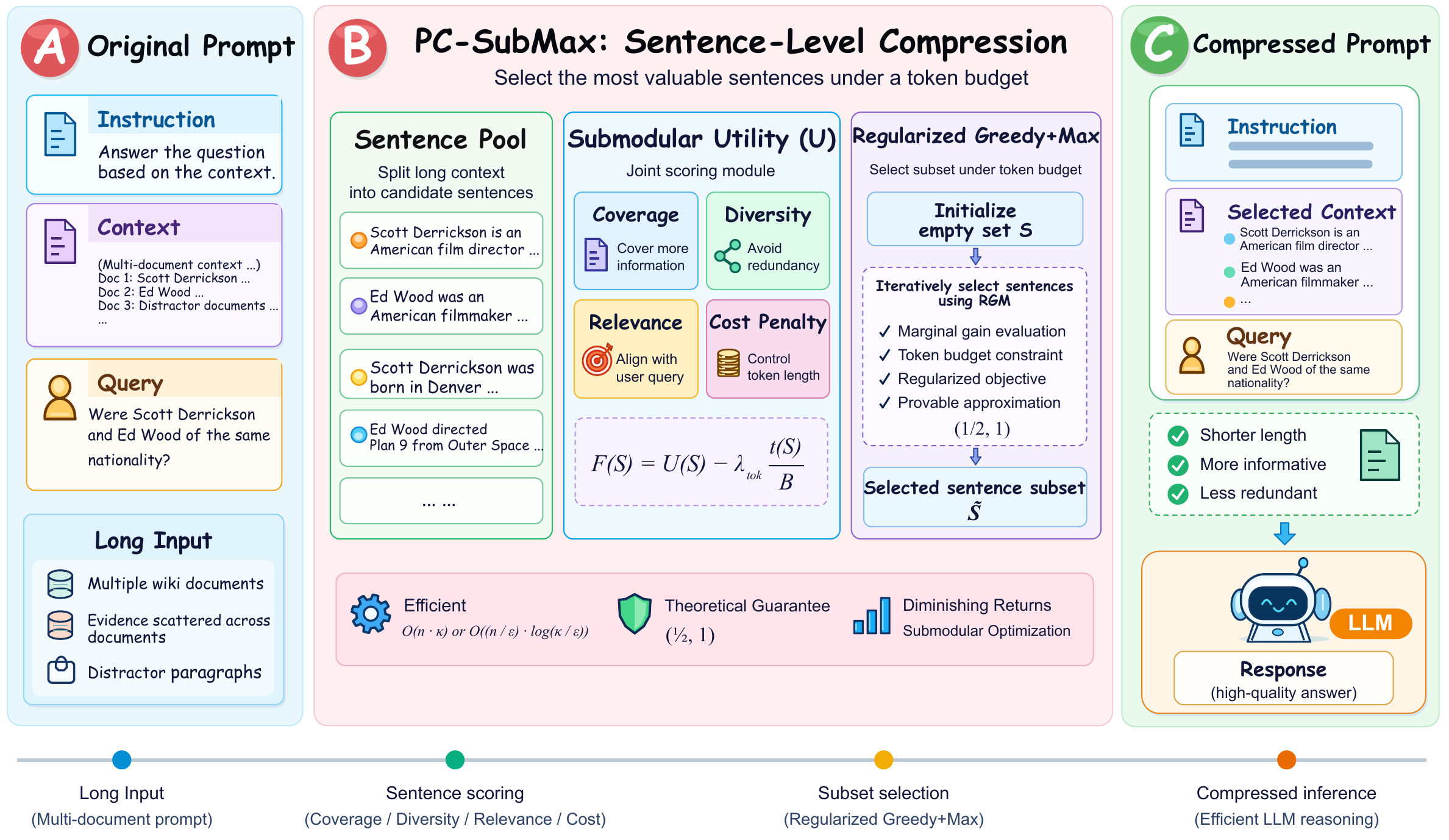}

\caption{Overview of PC-SubMax.}
\label{fig:overview}

\end{figure}

\section{PC-SubMax: Prompt Compression via \cs{Regularized} Submodular Maximization}\label{sc:method}

In this section, we present the overall framework of \textbf{PC-SubMax} (illustrated in Figure~\ref{fig:overview}). \cs{The framework has two stages: objective construction and algorithmic optimization. We first combine coverage, diversity, and query relevance into a normalized monotone submodular utility, and subtract a modular token penalty to balance utility against token usage. We then optimize the regularized objective under a knapsack constraint using deterministic Regularized Greedy+Max (RGM). The algorithm achieves a $(1/2,1)$ regularized guarantee with $O(n\kappa)$ value-oracle queries.}

\subsection{\cs{Regularized Submodular Objective Construction}}
We design the objective function in two parts. First, we define a semantic utility function $U(S)$ as a weighted combination of three components, each capturing a distinct semantic desideratum for prompt compression:
$U(S)
=
\lambda_{\text{cov}} \cdot f_{\text{cov}}(S)
+
\lambda_{\text{div}} \cdot f_{\text{div}}(S)
+
\lambda_{\text{rel}} \cdot f_{\text{rel}}(S),
\label{eq:semantic_utility}$
where $S \subseteq \mathcal{V}$ is the selected sentence subset, and $\lambda_{\text{cov}}, \lambda_{\text{div}}, \lambda_{\text{rel}} \geq 0$ control the importance of coverage, diversity, and relevance, respectively. \cs{These weights remain fixed during subset selection.}

\cs{Second, the knapsack constraint specifies a maximum admissible token cost. To price token usage within this feasible region, we assign sentence $s_i$ the penalty $\ell_i=\lambda_{\text{tok}}c(s_i)/B$ and define the modular penalty}
$\ell(S)=\sum_{s_i\in S}\ell_i
=\lambda_{\text{tok}}\frac{c(S)}{B}.
\label{eq:modular_cost}$
\cs{The regularized objective is}
$G(S)=U(S)-\ell(S),
\label{eq:objective}$
where $\lambda_{\text{tok}}\geq 0$ controls the strength of the token-cost regularization.

\paragraph{Coverage Function $f_{\text{cov}}(\cdot)$.}

\cs{
The coverage term measures lexical coverage of the original prompt. Let $W_i$ be the set of unique words in sentence $s_i$, and let $W=\bigcup_{i=1}^n W_i$. We define
$f_{\mathrm{cov}}(S)=\frac{|\bigcup_{s_i\in S}W_i|}{|W|}$,
with $f_{\mathrm{cov}}\equiv0$ if $W=\emptyset$. This term rewards vocabulary newly covered by the selected sentences. It provides a lexical coverage signal that complements embedding-based diversity and query relevance.
}

\begin{lemma}
\label{lem:cov}
The coverage function $f_{\text{cov}}(\cdot)$ is \cs{normalized,} non-negative, monotone, and submodular.
\end{lemma}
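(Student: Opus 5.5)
We will verify the four properties directly from the definition, treating the degenerate case $W=\emptyset$ separately. If $W=\emptyset$, then $f_{\mathrm{cov}}\equiv 0$ by convention. The zero function is trivially normalized, non-negative, monotone, and submodular, since every marginal gain is $0$. So assume $|W|>0$. Write $f_{\mathrm{cov}}(S)=h(S)/|W|$, where $h(S)=|N(S)|$ and $N(S)=\bigcup_{s_i\in S}W_i$. Multiplying by the positive constant $1/|W|$ preserves each of the four properties, so it suffices to prove them for $h$.

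Normalization holds because $N(\emptyset)=\emptyset$, so $h(\emptyset)=0$. Non-negativity holds because $h$ is a cardinality. For monotonicity, note that if $A\subseteq B$ then $N(A)\subseteq N(B)$, since every union term indexed by $A$ also appears in the union indexed by $B$. Hence $h(A)\le h(B)$.

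For submodularity, we compute the marginal gain exactly. For any $S$ and any $e=s_j\notin S$, we have $N(S\cup\{e\})=N(S)\cup W_j$. It follows that $h(e\mid S)=|W_j\setminus N(S)|$, the number of words in $s_j$ not yet covered. Now take $A\subseteq B$ and $e\in\mathcal{V}\setminus B$. Monotonicity of $N$ gives $N(A)\subseteq N(B)$, so $W_j\setminus N(B)\subseteq W_j\setminus N(A)$. Taking cardinalities yields $h(e\mid A)\ge h(e\mid B)$, which is exactly the inequality in Definition~\ref{def:submodular}.

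No step is a real obstacle; this is the standard argument that set-cover functions are submodular. The only points needing care are the convention $f_{\mathrm{cov}}\equiv 0$ when $W=\emptyset$, which avoids division by zero, and the fact that sentence occurrences are indexed by position. Two sentences with identical text are distinct elements whose word sets $W_i$ happen to coincide, and the argument above never assumes the $W_i$ are distinct.
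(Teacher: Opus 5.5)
Your proposal is correct and follows the paper's proof essentially step for step. Both treat $W=\emptyset$ as the zero function, obtain normalization, non-negativity and monotonicity directly from the definition, and prove submodularity by writing the marginal gain as the number of not-yet-covered words of $s_j$ and using $W_j\setminus N(B)\subseteq W_j\setminus N(A)$.
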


\paragraph{Diversity Function $f_{\text{div}}(\cdot)$.}

\cs{
To complement lexical coverage, we use a log-determinant diversity term inspired by determinantal subset selection~\citep{kulesza2012determinantal}. Let $z_i\in\mathbb{R}^d$ be the unit-normalized embedding of sentence $s_i$, and let $Z$ have rows $z_i^\top$. The cosine kernel $K=ZZ^\top$ is positive semidefinite. For $S\subseteq\mathcal V$, let $K_S$ denote its principal submatrix indexed by $S$, and let $I_{|S|}$ be the identity matrix of the same size. We define
$f_{\mathrm{div}}(S)=\log\det(I_{|S|}+K_S)$, where $f_{\mathrm{div}}(\emptyset)=0.$
Since $I_{|S|}+K_S\succeq I_{|S|}$, this quantity is finite and non-negative. It rewards diversity among the selected embedding directions: $m$ identical unit vectors yield $f_{\mathrm{div}}(S)=\log(1+m)$, whereas $m$ mutually orthogonal unit vectors yield $m\log 2$ when $d\geq m$. Thus overlapping representations receive diminishing incremental reward.
}

\begin{lemma}
\label{lem:div}
The diversity function $f_{\text{div}}(\cdot)$ is \cs{normalized,} non-negative, monotone, and submodular.
\end{lemma}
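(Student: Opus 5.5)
The plan is to write $f_{\mathrm{div}}$ as the log-volume of a Gram matrix of lifted vectors and to derive all four properties from a Schur-complement formula for marginal gains. Normalization holds by definition, since $f_{\mathrm{div}}(\emptyset)=0$. Non-negativity follows from $I_{|S|}+K_S\succeq I_{|S|}$: every eigenvalue of $I_{|S|}+K_S$ is at least $1$, so its determinant is at least $1$ and its logarithm is non-negative.

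For the remaining two properties, I will lift each sentence to the vector $v_i=(z_i,e_i)\in\mathbb{R}^{d+n}$, where $e_i$ is the $i$-th standard basis vector. Then $\langle v_i,v_j\rangle=K_{ij}+\delta_{ij}$, so $I_{|S|}+K_S$ is exactly the Gram matrix $M_S$ of $\{v_i\}_{i\in S}$. Since the $e_i$ components make the $v_i$ linearly independent, $M_S$ is positive definite.

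\textbf{Marginal gain.} For $e\notin S$, I will apply the block-determinant (Schur complement) formula to $M_{S\cup\{e\}}$:
\[
f_{\mathrm{div}}(e\mid S)=\log\bigl(M_{ee}-m_{S,e}^\top M_S^{-1}m_{S,e}\bigr),
\]
where $m_{S,e}=(\langle v_i,v_e\rangle)_{i\in S}$. The Schur complement equals $\|P_S^\perp v_e\|^2$, the squared distance from $v_e$ to $\mathrm{span}\{v_i:i\in S\}$.

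\textbf{Monotonicity.} The marginal gain is bounded below as follows:
\[
\|P_S^\perp v_e\|^2 \ge \|P_{\mathrm{all}}^\perp v_e\|^2 \;\ge\; \text{(component of } v_e \text{ along } e_e\text{)}^2=1.
\]
The first inequality holds because enlarging the span can only shrink the residual. The second holds because $e_e$ is orthogonal to all $v_i$ with $i\neq e$ only up to the $z$-parts, so I will argue it directly instead: $\|P_S^\perp v_e\|^2=\min_x\|v_e-\sum_{i\in S}x_iv_i\|^2$. The $e_e$ coordinate of $v_e-\sum_{i\in S} x_i v_i$ is always $1$, because no $v_i$ with $i\in S$ has a nonzero $e_e$ coordinate. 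Hence the residual is at least $1$, the marginal gain is at least $\log 1=0$, and $f_{\mathrm{div}}$ is monotone.

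\textbf{Submodularity.} For $A\subseteq B$ and $e\notin B$, the span over $A$ is contained in the span over $B$, so the distance from $v_e$ to the span over $B$ is at most the distance to the span over $A$. Since $\log$ is increasing, this gives $f_{\mathrm{div}}(e\mid A)\ge f_{\mathrm{div}}(e\mid B)$.

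The main obstacle is justifying the Schur-complement and projection identity cleanly. I will state the standard fact $\det M_{S\cup\{e\}}=\det M_S\cdot(M_{ee}-m^\top M_S^{-1}m)$ and identify the second factor with the least-squares residual via the normal equations. The equivalent form $\det M_S=\mathrm{vol}^2$ gives the same conclusion and can serve as a cross-check.
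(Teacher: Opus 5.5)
Your proof is correct, but it takes a different route from the paper's.

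\textbf{The paper's route.} The paper moves to the $d\times d$ side via Sylvester's identity, $\det(I_{|S|}+K_S)=\det A_S$ with $A_S=I_d+\sum_{s_i\in S}z_iz_i^\top$. It then uses the matrix determinant lemma to write $f_{\mathrm{div}}(e\mid S)=\log(1+z_e^\top A_S^{-1}z_e)\geq 0$. Submodularity comes from the fact that inversion reverses the Loewner order: $S\subseteq T$ gives $A_T\succeq A_S\succ 0$, hence $A_T^{-1}\preceq A_S^{-1}$.

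\textbf{Your route.} You stay on the $|S|\times|S|$ kernel side and realize $I_{|S|}+K_S$ as the Gram matrix of the lifted vectors $v_i=(z_i,e_i)$. You identify the marginal gain with the log of the squared distance from $v_e$ to $\mathrm{span}\{v_i:i\in S\}$. Nested spans then give submodularity, and the private coordinate of $v_e$ bounds the residual below by $1$, giving monotonicity.

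\textbf{Comparison.} Your version replaces the operator antitonicity of the inverse with the more elementary fact that distances to nested subspaces cannot increase. Your Schur complement is also exactly the $\sigma_e^2$ that the paper's incremental-Cholesky implementation evaluates in its complexity appendix. The paper's version is shorter and needs no auxiliary lifting into $\mathbb{R}^{d+n}$. It also matches the $d$-dimensional Sylvester representation the paper mentions there as an alternative implementation. The two marginal formulas agree: by the Woodbury identity, $K_{ee}+1-k_e^\top(I+K_S)^{-1}k_e=1+z_e^\top A_S^{-1}z_e$.

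\textbf{A small clean-up.} In the monotonicity step you hesitate over whether $(0,e_e)$ is orthogonal to the other lifted vectors. It is exactly orthogonal to every $v_i$ with $i\neq e$. So your chain through $P_{\mathrm{all}}^\perp$ is valid provided $P_{\mathrm{all}}$ projects onto $\mathrm{span}\{v_i:i\neq e\}$; if the span contained $v_e$, the middle term would be $0$. The direct coordinate argument you give afterwards is complete on its own, so you can simply drop that chain.
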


\paragraph{Relevance Function $f_{\text{rel}}(\cdot)$.}

\cs{For tasks with a query $q$, relevance directs selection toward information useful for answering that query. This complements coverage and diversity, since varied or previously uncovered content may still be unrelated to the question. Irrelevant context can also distract LLMs during reasoning~\citep{shi2023large}. Let $z(x)$ be the unit-normalized representation of text $x$ obtained from the same encoder. We define the positive part of cosine similarity as $\rho(s_i,x)=[z_i^\top z(x)]_+$, where $[t]_+=\max\{t,0\}$. The single-hop relevance score is} $r_i^{(1)}=\rho(s_i,q).
\label{eq:single_hop_relevance}$
\cs{For multi-hop question answering, direct similarity to $q$ may underweight bridge evidence whose relevance becomes apparent after another sentence introduces an entity or relation. We therefore precompute a bridge score using beam search over evidence paths of at most $H\geq2$ hops. The bridge bonus reflects the relevance gained when preceding evidence is appended to the query, together with a title-matching condition when the input has the required structure. Let $b_i\in[0,1]$ be the highest score among the examined paths of length at least two ending at $s_i$, with $b_i=0$ if no such path is examined. The resulting score is} $r_i^{(\mathrm{MH})}=\min\left\{1,\;r_i^{(1)}+\frac{b_i}{H}\right\}.
\label{eq:multihop_relevance}$
The construction of $b_i$, including the conditional relevance gain, the title-bridge test, the path score, and the beam-search procedure, is deferred to Appendix~\ref{sec:appendix_multihop}. %

For single-hop tasks we use $r_i=r_i^{(1)}$, whereas for multi-hop tasks we use $r_i=r_i^{(\mathrm{MH})}$. In both cases, all scores are computed before subset selection and remain fixed throughout RGM. The relevance function is therefore $f_{\text{rel}}(S)=\sum_{\cs{s_i\in S}}r_i.
\label{eq:relevance_function}$

\begin{lemma}
\label{lem:rel}
Under either the single-hop or multi-hop scoring rule, the relevance function $f_{\text{rel}}(\cdot)$ is \cs{normalized,} non-negative, monotone, and modular.
\end{lemma}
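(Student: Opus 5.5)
The plan is to reduce everything to one fact: each score $r_i$ is a fixed real number in $[0,1]$, computed before subset selection and independent of $S$. Once that is established, $f_{\text{rel}}(S)=\sum_{s_i\in S}r_i$ is an additive set function with non-negative weights, and the four properties follow directly.

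First, I will check that $r_i\ge 0$ under both scoring rules. In the single-hop case, $r_i^{(1)}=\rho(s_i,q)=[z_i^\top z(q)]_+=\max\{z_i^\top z(q),0\}$, so $r_i^{(1)}\in[0,1]$; the upper bound holds because both vectors are unit-normalized and Cauchy--Schwarz gives $|z_i^\top z(q)|\le 1$. In the multi-hop case, $b_i\in[0,1]$ and $H\ge 2$ give $r_i^{(1)}+b_i/H\ge 0$. Taking the minimum with $1$ then yields $r_i^{(\mathrm{MH})}\in[0,1]$. I also need that these scores do not depend on the selected set. For the single-hop score this is immediate. For the multi-hop score, $b_i$ comes from a beam search over paths that only involve the fixed prompt and query. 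The main point to state carefully is that this beam search is run once as preprocessing, as the text says, and not conditioned on the current $S$. This is the only place where the modular structure could fail.

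With fixed weights $r_i\ge 0$ in hand, the properties follow in order:
\begin{itemize}
\item Normalization: $f_{\text{rel}}(\emptyset)$ is an empty sum, so it equals $0$.
\item Non-negativity: every term $r_i$ is non-negative.
\item Monotonicity: for $A\subseteq B$, we have $f_{\text{rel}}(B)-f_{\text{rel}}(A)=\sum_{s_i\in B\setminus A}r_i\ge 0$.
\item Modularity: for any $S$ and any $s_j\notin S$, the marginal gain is $f_{\text{rel}}(s_j\mid S)=r_j$, which is independent of $S$.
\end{itemize}
Because the marginal gain does not depend on $S$, the inequality in Definition~\ref{def:submodular} holds with equality, so the function is both submodular and supermodular. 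Equivalently, $f_{\text{rel}}(A\cup B)+f_{\text{rel}}(A\cap B)=f_{\text{rel}}(A)+f_{\text{rel}}(B)$ holds for all $A,B$. This follows by counting how many times each $r_i$ appears on each side.

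I expect no real obstacle beyond stating explicitly that the bridge scores $b_i$ are precomputed constants. With that in place, the rest is routine bookkeeping.
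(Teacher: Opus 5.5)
Your proposal is correct and follows the paper's proof essentially step for step. Both arguments rest on three points: the scores $r_i$ are non-negative in both modes, the multi-hop bridge scores are fixed before RGM runs, and the resulting marginal gain $r_e$ does not depend on the selected set, which gives modularity, with monotonicity and normalization following immediately. Your extra details (the $[0,1]$ bound via Cauchy--Schwarz and the lattice identity) are valid but not needed for the lemma.
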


\paragraph{Token-Cost Regularization $\ell(\cdot)$.}
\cs{The penalty has constant marginal cost $\ell(s_i\mid S)=\lambda_{\text{tok}}c_i/B$. It favors a shorter feasible subset when the additional utility of a longer one does not offset its additional token penalty. For feasible $S$, $0\leq\ell(S)\leq\lambda_{\text{tok}}$; changing $\lambda_{\text{tok}}$ changes this trade-off while leaving the feasible region unchanged.}
\begin{lemma}
\label{lem:bud}
The token-price function $\ell(\cdot)$ is normalized, non-negative, monotone, and modular.
\end{lemma}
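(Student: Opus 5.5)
The argument only needs the explicit form $\ell(S)=\sum_{s_i\in S}\ell_i$ with $\ell_i=\lambda_{\text{tok}}c_i/B$, together with the standing assumptions $c_i>0$, $B>0$, and $\lambda_{\text{tok}}\geq 0$. The first step is to record that each coefficient is nonnegative: $\ell_i\geq 0$ for every $i$. The four properties then follow directly from additivity, in the order they are listed.

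\emph{Normalization.} The empty sum gives $\ell(\emptyset)=0$.

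\emph{Non-negativity.} For any $S\subseteq\mathcal V$, $\ell(S)$ is a finite sum of nonnegative terms, so $\ell(S)\geq 0$.

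\emph{Monotonicity.} Take $A\subseteq B'\subseteq\mathcal V$. I will write $B'$ for the larger set to avoid a clash with the budget $B$. Splitting the sum over the disjoint union $B'=A\cup(B'\setminus A)$ gives
$\ell(B')=\ell(A)+\sum_{s_i\in B'\setminus A}\ell_i\geq \ell(A)$,
since every added term is nonnegative.

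\emph{Modularity.} For any $S$ and any $e=s_j\notin S$, additivity gives $\ell(e\mid S)=\ell(S\cup\{e\})-\ell(S)=\ell_j$. This marginal gain does not depend on $S$. Hence the submodular inequality of Definition~\ref{def:submodular} holds with equality for all $A\subseteq B'$ and $e\notin B'$. Applying the same equality to $-\ell$ shows that $-\ell$ is also submodular, so $\ell$ is both submodular and supermodular, which is what modular means. Equivalently, $\ell(A\cup C)+\ell(A\cap C)=\ell(A)+\ell(C)$ for all $A,C$, by counting how often each $\ell_i$ appears on each side.

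None of these steps is a real obstacle. The only point needing care is to state which notion of modularity is meant, and to note that it is the additive form, not any sign condition, that yields the equality.
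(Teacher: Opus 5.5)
Your proof is correct and follows essentially the same route as the paper's: normalization from the empty sum, non-negativity and monotonicity from $\lambda_{\mathrm{tok}}c_e/B\geq 0$, and modularity from the marginal $\ell(e\mid S)=\lambda_{\mathrm{tok}}c_e/B$ being independent of $S$. Your additions (splitting the sum over $B'\setminus A$, and noting that modular means both submodular and supermodular, or equivalently the lattice identity) make explicit what the paper leaves implicit.
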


Combining the results above, we establish the properties of the composite objective:

\begin{theorem}
\label{thm:objective}
The semantic utility $U(\cdot)$ is normalized, non-negative, monotone, and submodular. Consequently, the regularized objective $G(\cdot)=U(\cdot)-\ell(\cdot)$ is normalized and submodular, but it may be non-monotone and may take negative values.
\end{theorem}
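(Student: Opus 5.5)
The proof assembles the component lemmas through closure properties of these function classes. By Lemmas~\ref{lem:cov}, \ref{lem:div}, and \ref{lem:rel}, each of $f_{\mathrm{cov}}$, $f_{\mathrm{div}}$, and $f_{\mathrm{rel}}$ is normalized, non-negative, monotone, and submodular; for $f_{\mathrm{rel}}$ this holds because a modular function satisfies the inequality in Definition~\ref{def:submodular} with equality. The first step is to verify that each property is preserved under non-negative linear combinations.

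For $U=\lambda_{\mathrm{cov}}f_{\mathrm{cov}}+\lambda_{\mathrm{div}}f_{\mathrm{div}}+\lambda_{\mathrm{rel}}f_{\mathrm{rel}}$ with $\lambda$'s $\ge 0$, the checks are direct.
\begin{itemize}
\item Normalization: $U(\emptyset)=\sum\lambda\cdot 0=0$.
\item Non-negativity: a non-negative combination of non-negative values is non-negative.
\item Monotonicity: if $A\subseteq B$, each $f(A)\le f(B)$; multiplying by $\lambda\ge0$ and summing gives $U(A)\le U(B)$.
\item Submodularity: the marginal gain is linear in the function, $U(e\mid A)=\sum\lambda\, f(e\mid A)$. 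Each term is non-increasing in $A$, and non-negative weights preserve this.
\end{itemize}

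The second step handles $G=U-\ell$. By Lemma~\ref{lem:bud}, $\ell$ is normalized and modular, so $G(\emptyset)=U(\emptyset)-\ell(\emptyset)=0$. Marginals satisfy $G(e\mid S)=U(e\mid S)-\ell_e$. Since $\ell_e$ is independent of $S$, the diminishing-returns inequality for $U$ transfers verbatim to $G$: for $A\subseteq B$ and $e\notin B$,
\[
G(e\mid A)-G(e\mid B)=U(e\mid A)-U(e\mid B)\ge 0 .
\]

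The main obstacle is not a derivation but the ``may'' claims, which need explicit witnesses showing monotonicity and non-negativity can fail. I would use a single sentence $s_1$ with $\lambda_{\mathrm{cov}}=\lambda_{\mathrm{div}}=\lambda_{\mathrm{rel}}=0$ and $\lambda_{\mathrm{tok}}>0$. Then $G(\{s_1\})=-\lambda_{\mathrm{tok}}c_1/B<0=G(\emptyset)$, which shows both negativity and non-monotonicity.

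Degenerate zero weights might seem unconvincing, so I would also give a nondegenerate variant: take $\lambda_{\mathrm{tok}}$ larger than $B\cdot U(\{s_1\})/c_1$. This is possible because $U(\{s_1\})$ is finite, with $U(\{s_1\})\le\lambda_{\mathrm{cov}}+\lambda_{\mathrm{div}}\log 2+\lambda_{\mathrm{rel}}$. This choice again gives $G(\{s_1\})<G(\emptyset)=0$.
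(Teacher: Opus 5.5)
Your proposal is correct and follows essentially the same route as the paper: the component lemmas plus closure under non-negative linear combinations give the four properties of $U$, and subtracting the modular $\ell$ shifts every marginal by the constant $\ell_e$, which preserves normalization and submodularity. Your explicit witness $G(\{s_1\})<0=G(\emptyset)$, for example with $\lambda_{\mathrm{tok}}>B\,U(\{s_1\})/c_1$, makes concrete what the paper states only briefly, namely that $G(e\mid S)=U(e\mid S)-\ell_e$ can be negative.
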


\subsection{Sentence Selection Procedure}

Having established the decomposition $G=U-\ell$, we now turn to the algorithmic challenge of maximizing it under the knapsack constraint in~\eqref{eq:knapsack}. \cs{To address this problem, we propose \emph{Regularized Greedy+Max} (RGM), a deterministic algorithm presented in Algorithm~\ref{alg:pc_submax}.} \cs{The algorithm and analysis apply to arbitrary non-negative modular prices $\ell_i$, independently of the positive knapsack weights $c_i$. PC-SubMax uses the specialization $\ell_i=\lambda_{\text{tok}}c(s_i)/B$.}


\begingroup
\SetAlgoNoLine
\SetAlgoNoEnd
\DontPrintSemicolon
\SetNlSty{textnormal}{}{:}
\SetAlgoNlRelativeSize{0}
\SetAlgoCaptionSeparator{}
\IncMargin{1.5em}

\begin{algorithm}[!htbp]
\caption{Regularized Greedy+Max (RGM) with exact marginal evaluations}
\label{alg:pc_submax}
\KwIn{\cs{candidate set $\mathcal{V}$, costs $c_i>0$, budget $B>0$, monotone submodular utility $U$, and modular penalties $\ell_i\geq0$}}
\KwOut{\cs{feasible selected subset $Q$}}

$S\gets\emptyset$; $Q\gets\emptyset$\;
\While{\textnormal{true}}{
    $E(S)\gets\{e\in\mathcal{V}\setminus S:c(S)+c_e\leq B\}$\;
    \lIf{$E(S)=\emptyset$}{\textbf{break}}
    \For{$e\in E(S)$}{
        $\Delta_e\gets U(S\cup\{e\})-U(S)$\;
    }
    $b\gets\arg\max_{e\in E(S)}(\Delta_e-\ell_e)$\;
    \lIf{$G(S\cup\{b\})>G(Q)$}{$Q\gets S\cup\{b\}$}
    $a\gets\arg\max_{e\in E(S)}(\Delta_e-2\ell_e)/c_e$\;
    \lIf{$\Delta_a-2\ell_a\leq0$}{\textbf{break}}
    $S\gets S\cup\{a\}$\;
    \lIf{$G(S)>G(Q)$}{$Q\gets S$}
}
\Return{$Q$}\;
\end{algorithm}
\endgroup

\paragraph{Algorithm Overview.}
\cs{RGM maintains a greedy path $S$ and the best recorded feasible set $Q$. At each prefix, it evaluates every feasible utility marginal $\Delta_e=U(e\mid S)$ once and reuses it in two selection rules. The augmentation rule chooses $b$ maximizing $\Delta_e-\ell_e$ and records $S\cup\{b\}$ if it improves $Q$. It considers the unnormalized marginal gain in $G$, complementing the density-based path. The path rule chooses $a$ maximizing $(\Delta_e-2\ell_e)/c_e$ and adds it to $S$ only when $\Delta_a-2\ell_a>0$. The factor two is used to construct the path, while all candidate solutions are compared using $G=U-\ell$. Summing the accepted marginal gains shows that every greedy prefix satisfies $U(S)\geq2\ell(S)$ and hence $G(S)\geq U(S)/2$. This invariant connects the doubled penalty in the selection rule
to the regularized approximation analysis. With $U(S)$ cached, the value of an augmented set is computed as $G(S)+\Delta_e-\ell_e$. The algorithm compares the empty set, accepted prefixes, and the best feasible one-item augmentation of each scanned prefix. For the PC-SubMax penalty $\ell_e=\lambda_{\mathrm{tok}}c_e/B$, the path density simplifies to}
$\frac{U(e\mid S)-2\ell_e}{c_e}
=\frac{U(e\mid S)}{c_e}-\frac{2\lambda_{\mathrm{tok}}}{B}.
\label{eq:rgm_density}$
\cs{Thus, in this specialization, cost scaling preserves the utility-density ordering and sets the stopping threshold $U(e\mid S)/c_e>2\lambda_{\mathrm{tok}}/B$. The penalty also enters the augmentation rule and the comparison of candidate solutions. For general modular penalties, both the path ordering and its stopping condition depend on $\ell_e$. Ties are resolved by original sentence index in PC-SubMax, making selection deterministic. The empty incumbent ensures that the returned objective value is non-negative.}

\cs{
\begin{theorem}[Regularized RGM guarantee]
\label{thm:pc_guarantee}
Let $U:2^{\mathcal V}\to\mathbb R_{\geq0}$ be normalized, monotone,
and submodular.  Let $B>0$, let $c_e>0$ be arbitrary knapsack weights,
and let $\ell(S)=\sum_{e\in S}\ell_e$ be an arbitrary non-negative
modular cost.  Under exact marginal evaluations and exact maximization
in both selection rules, Algorithm~\ref{alg:pc_submax} deterministically
returns a feasible set $Q$ such that, for every feasible comparator
$O\subseteq\mathcal V$,
$U(Q)-\ell(Q)\geq\max\left\{0,\;\frac12 U(O)-\ell(O)\right\}.
\label{eq:rgm_guarantee}$
The algorithm uses $O(n\kappa)$ value-oracle queries, where $n=|\mathcal V|$ and $\kappa$ is
the maximum cardinality of a feasible set.  In particular, the result
allows the modular cost $\ell_e$ and knapsack weight $c_e$ to vary
independently; the token price $\ell_e=\lambda_{\mathrm{tok}}c_e/B$
used by PC-SubMax is a special case.
\end{theorem}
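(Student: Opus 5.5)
The plan is to view the path rule as a cost-density greedy on the auxiliary function $F(S)=U(S)-2\ell(S)$, and to compare against $\tfrac12 F(O)=\tfrac12U(O)-\ell(O)$. Feasibility of $Q$ is immediate, since every recorded set is $S\cup\{e\}$ with $e\in E(S)$ or an accepted prefix. The $\max\{0,\cdot\}$ part holds because $Q$ is initialized to $\emptyset$ and replaced only on strict improvement of $G$, so $G(Q)\ge G(\emptyset)=0$.

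\textbf{Step 1 (invariant).} Let $S_0=\emptyset\subset S_1\subset\cdots\subset S_T$ be the accepted prefixes, with $S_{j+1}=S_j\cup\{a_j\}$. Each acceptance has $U(a_j\mid S_j)>2\ell_{a_j}$, so telescoping gives $U(S_j)\ge 2\ell(S_j)$. Hence $G(S_j)=U(S_j)-\ell(S_j)\ge U(S_j)/2$ and $G(S_j)\ge F(S_j)$.

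\textbf{Step 2 (greedy inequality).} Fix a prefix $S_j$ that is scanned and let $\delta_j=(\Delta_{a_j}-2\ell_{a_j})/c_{a_j}$. Submodularity and monotonicity of $U$ (Theorem~\ref{thm:objective}) give
\[
U(O)-U(S_j)\;\le\;\sum_{o\in O\setminus S_j}U(o\mid S_j).
\]
For every $o\in (O\setminus S_j)\cap E(S_j)$, the path rule gives $U(o\mid S_j)\le 2\ell_o+\delta_j c_o$. Two cases follow.

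\emph{Stopping case.} Suppose the loop terminates because $\Delta_a-2\ell_a\le 0$ (so $\delta_j\le0$) and every element of $O\setminus S_j$ still fits. Then $U(O)\le U(S_j)+2\ell(O)$, and Step~1 yields $G(Q)\ge G(S_j)\ge \tfrac12U(S_j)\ge\tfrac12U(O)-\ell(O)$.

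\emph{Budget case.} Otherwise some element of $O$ eventually fails to fit. Let $o^*$ be the element of $O$ that first becomes infeasible along the path (or simply the costliest element of $O$), and let $i$ be the last index for which $O\setminus S_i\subseteq E(S_i)$.

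\textbf{Step 3 (combining augmentation and path).} At prefix $S_i$ the augmentation rule records a set whose value is at least
\[
G(S_i)+\max_{e\in E(S_i)}\bigl(U(e\mid S_i)-\ell_e\bigr)\;\ge\; G(S_i)+U(o^*\mid S_i)-\ell_{o^*}.
\]
I will follow the Greedy+Max analysis of Yaroslavtsev et al.\ with $F$ in place of $U$. Writing $r_j=U(O)-2\ell(O)-F(S_j)$, the density bound of Step~2 applied to the elements of $O$ other than $o^*$ (all of which still fit while $j\le i$) gives a recurrence of the form
\[
r_{j+1}\le r_j\Bigl(1-\tfrac{c_{a_j}}{B-c_{o^*}}\Bigr),
\]
up to the contribution of $o^*$. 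The ratio $\tfrac12$ should then come from balancing two quantities: the accumulated gain $F(S_{i+1})$, whose cost exceeds $B-c_{o^*}$, and the single-item gain $U(o^*\mid S_i)-\ell_{o^*}$, using $G(S_i)\ge F(S_i)$ and $G(S_i)\ge U(S_i)/2$.

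I expect Step~3 to be the main obstacle. The penalty appears with factor $2$ in the path but factor $1$ in the augmentation, and the final inequality must charge exactly $\ell(O)$ and not $2\ell(O)$. My first attempt will be to split $O=\{o^*\}\cup O'$. I will bound $U(O')$ via the path with the $2\ell(O')$ term, then halve it using $G\ge U/2$. Separately, I will handle $o^*$ through the augmentation, where only $\ell_{o^*}$ is paid; since $\ell_{o^*}\ge0$, the combination should yield $\tfrac12U(O)-\ell(O)$. If the halving is lossy, I will instead average the two candidate values $G(S_{i+1})$ and $G(S_i\cup\{b\})$ directly.

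\textbf{Step 4 (query complexity).} Each loop iteration computes $\Delta_e$ for $|E(S)|\le n$ candidates using the cached $U(S)$, which is $O(n)$ oracle calls. The path grows by one element per non-terminal iteration and always remains feasible, so there are at most $\kappa+1$ iterations, giving $O(n\kappa)$ queries in total.
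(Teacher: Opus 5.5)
Your Step~1, the stopping case of Step~2, and Step~4 are correct and match the paper. The stopping case should also allow termination with $E(S_j)=\emptyset$, where your hypothesis forces $O\subseteq S_j$. The gap is Step~3, the budget case, which carries the whole theorem but is left as a plan, and neither of its concrete mechanisms works.

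The multiplicative recurrence needs $\sum_{o\in O\setminus S_j}F(o\mid S_j)\ge r_j$, which is essentially $F(O)\le F(S_j\cup O)$. But $F=U-2\ell$ is not monotone. Summing your Step~2 bound only gives $\sum_{o\in O\setminus S_j}F(o\mid S_j)\ge r_j-2\ell(S_j\setminus O)$, and the penalty already paid by path items outside $O$ never shrinks. For instance, take $O=\{o\}$ with $\ell_o=0$, and a cheap path item $x$ with $U(\{x\})=U(\{o\})=U(\{x,o\})$ and $0<2\ell_x<U(\{o\})$. Then $F(o\mid\{x\})=0<2\ell_x=r_1$.

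The ``halving'' route is lossy. Replacing $U(S_i)$ by $2G(S_i)$ throws away the $\ell(S_i)$ that must cancel against the $\ell(S_i)$ inside $G(S_{i+1})$. The natural combination then leaves an extra $\tfrac12G(S_i)$ and yields only $G(Q)\ge\tfrac13U(O)-\tfrac23\ell(O)$. The invariant $G\ge U/2$ is the right tool only in the stopping case.

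The missing ingredients are a \emph{linear} prefix certificate and exact penalty bookkeeping. Every item committed after $S_j$ was feasible at $S_j$, so submodularity of $F$ and the density rule make the committed densities non-increasing. With $a=a_i$ and $d=F(a\mid S_i)/c_a>0$, this gives $F(S_i)\ge c(S_i)\,d$.

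Take $o^*\in O\setminus S_{i+1}$ that is infeasible at $S_{i+1}$. Your first choice does this; the costliest element of $O$ may already lie in $S_i$ and then is unavailable to the augmentation, which is why the paper takes the costliest element of $O\setminus S_i$. Put $T=O\setminus(S_i\cup\{o^*\})$, so that $c(T)\le B-c_{o^*}<c(S_{i+1})$. Using $U(S_i\cup\{o^*\})=G(S_i\cup\{o^*\})+\ell(S_i)+\ell_{o^*}$, $U(e\mid S_i)\le d\,c_e+2\ell_e$ for $e\in T$, and $\ell_{o^*}+2\ell(T)\le2\ell(O)$,
\[
U(O)-2\ell(O)\le U(S_i\cup\{o^*\})+\sum_{e\in T}U(e\mid S_i)-2\ell(O)\le G(S_i\cup\{o^*\})+\ell(S_i)+d\,c(T),
\]
while
\[
G(S_{i+1})=F(S_i)+c_a d+\ell(S_i)+\ell_a\ge c(S_{i+1})\,d+\ell(S_i)\ge d\,c(T)+\ell(S_i).
\]
The $\ell(S_i)$ terms cancel, so $U(O)-2\ell(O)\le G(S_i\cup\{o^*\})+G(S_{i+1})\le2G(Q)$, since both sets are recorded. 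This is the paper's argument. Your fallback of averaging the augmentation and next-prefix values is the right instinct, but it goes through only with these two ingredients.
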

Proofs of the component properties and the RGM guarantee are provided in Appendix~\ref{sec:appendix_proofs}.
}

\cs{
\paragraph{Accelerated variant.}
Appendix~\ref{sec:fast_rgm} gives a lazy implementation of the same cost-scaled path and per-prefix augmentation framework. With exact marginal evaluations but approximate selection rules, it achieves a $(1/2-\varepsilon,1)$ regularized guarantee using $O((n/\varepsilon)[1+\log(B/(\varepsilon c_{\min}))])$ value-oracle queries, where $0<\varepsilon<1/2$ and $c_{\min}$ is the minimum individually feasible item cost. If no item is individually feasible, the selector returns $\emptyset$ immediately. Preprocessing and marginal-evaluation costs are analyzed below. Our empirical evaluation uses the full-scan RGM in Algorithm~\ref{alg:pc_submax}.
}

\subsection{Computational Complexity Analysis}

\cs{
For the full-scan implementation, let $T_{\mathrm{prep}}$ denote the total cost of text preprocessing, token-cost and vocabulary construction, encoding the sentences and the base query, and optional multi-hop relevance computation. Dense kernel construction is counted separately and costs $O(n^2d)$. At a prefix of size $j$, a Cholesky-based triangular solve evaluates one diversity marginal in $O(1+j^2)$ operations. Coverage evaluation costs $O(|W_e|)$ per candidate under constant-time dictionary membership operations, while the relevance and penalty marginals are constant-time lookups. Summing the candidate scans yields $O(n\kappa\bar L+n\kappa^3)$ selection work, where $\bar L=n^{-1}\sum_i|W_i|$. Detailed derivations appear in Appendix~\ref{sec:appendix_complexity}.
}

\begin{theorem}\label{thm:complexity}
\cs{
For a candidate set with at least one individually feasible item, let $n$ be the number of sentences, $\kappa$ the maximum feasible cardinality, $d$ the embedding dimension, and $\bar L$ the average number of unique words per sentence. With preprocessing cost $T_{\mathrm{prep}}$ defined above and constant-time dictionary membership operations, the dense full-scan implementation has runtime
$O(T_{\mathrm{prep}}+n^2d+n\kappa\bar L+n\kappa^3).$
Under bounded sentence lengths and fixed encoder configuration, encoder input limits, hop limit, and beam width, this simplifies to $O(n^2d+n\kappa^3)$.
}
\end{theorem}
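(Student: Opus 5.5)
We prove Theorem~\ref{thm:complexity} by charging each stage of the full-scan implementation of Algorithm~\ref{alg:pc_submax} separately and then summing. There are three stages: preprocessing, kernel construction, and the selection loop.

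\emph{Preprocessing and kernel.} By definition, $T_{\mathrm{prep}}$ already absorbs tokenization, cost computation, vocabulary construction, encoding, and the optional multi-hop bridge scores $b_i$. The relevance scores $r_i$ are fixed before selection, so each relevance marginal is an $O(1)$ lookup; the same holds for the penalty marginal $\ell_e$. Forming $K=ZZ^\top$ from the $n$ unit embeddings in $\mathbb R^d$ costs $O(n^2d)$.

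\emph{Selection loop.} Every accepted path step adds an element to $S$ while keeping $c(S)\le B$. Hence the path has at most $\kappa$ prefixes, and the final (possibly breaking) iteration adds at most one more. Each iteration scans $E(S)\subseteq\mathcal V$, so there are $O(n\kappa)$ candidate evaluations in total; this matches the oracle count in Theorem~\ref{thm:pc_guarantee}. For a candidate $e$ at a prefix $S$ with $|S|=j$, the cost splits by component of $U$:
\begin{itemize}[leftmargin=2em]
\item \textbf{Coverage.} Maintain a hash set of covered words. The marginal is the number of words in $W_e$ not yet covered, which costs $O(|W_e|)$.
\item \textbf{Diversity.} By the Schur complement, $f_{\mathrm{div}}(e\mid S)=\log\bigl(1+K_{ee}-k_{S,e}^\top(I+K_S)^{-1}k_{S,e}\bigr)$. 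Maintain a Cholesky factor $L_S$ of $I+K_S$. One triangular solve then costs $O(j^2)$, plus $O(1)$ overhead.
\end{itemize}
After accepting $a$, updating $L_S$ by appending a row costs $O(j^2)$, and updating the covered set costs $O(|W_a|)$. These updates are dominated by the scan. Augmented values $G(S)+\Delta_e-\ell_e$ and both argmax rules take $O(1)$ per candidate once the $\Delta_e$ are computed.

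\emph{Summation.} Summing coverage work over prefixes gives $\sum_{j\le\kappa}\sum_{e}|W_e|\le(\kappa+1)n\bar L=O(n\kappa\bar L)$. Summing diversity work gives $\sum_{j=0}^{\kappa} n\,O(1+j^2)=O(n\kappa+n\kappa^3)=O(n\kappa^3)$, using $\kappa\ge1$, which holds because some item is individually feasible. Adding preprocessing and the kernel yields
\[
O(T_{\mathrm{prep}}+n^2d+n\kappa\bar L+n\kappa^3).
\]

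\emph{Simplified bound.} Under bounded sentence lengths we have $\bar L=O(1)$, so $n\kappa\bar L=O(n\kappa)$. It remains to show $T_{\mathrm{prep}}=O(n^2d)$. With fixed encoder configuration and input limits, encoding $n+1$ texts costs $O(n)$ encoder calls of constant cost. The multi-hop beam search with fixed $H$ and beam width evaluates $O(n)$ expansions per hop, each at most one encoding plus an $O(d)$ similarity. I expect this to be the main obstacle: one must check from Appendix~\ref{sec:appendix_multihop} that every expansion scores candidates against all $n$ sentences, so the search totals $O(n^2d)$ and nothing more, namely at most $O(1)$ beams each scoring $O(n)$ candidates per source sentence. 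Once that is established, $T_{\mathrm{prep}}=O(n^2d)$, and the bound collapses to $O(n^2d+n\kappa^3)$.
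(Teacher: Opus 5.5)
\textbf{Same approach as the paper, one step left open.} Your stagewise accounting matches the paper's proof in Appendix~\ref{sec:appendix_complexity}: a hash-set coverage marginal in $O(|W_e|)$, a Schur-complement diversity marginal via one triangular solve against the maintained Cholesky factor in $O(1+j^2)$, and $O(1)$ relevance and penalty lookups. Summed over prefixes, these give $O(n\kappa\bar L+n\kappa^3)$.

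Your $\kappa+1$ scans and $O(j^2)$ row update are harmless overcounts. The paper notes that a size-$\kappa$ prefix has $E(S)=\emptyset$, so only $j\le\kappa-1$ is scanned. It also appends the new row in $O(j+1)$ by reusing the cached solve.

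\textbf{The open step: the multi-hop cost.} The step you flag as the main obstacle is left unproved, and your picture of the multi-hop search is not the construction in Appendix~\ref{sec:appendix_multihop}.
\begin{itemize}
\item There is a single beam of width $b=\min\{n,\max(4,2H)\}$, not a search per source sentence.
\item The augmented query $q_p$ depends only on the retained prefix $p$, so encodings are charged per prefix, not per expansion. Hence at most $(H-1)b$ augmented queries are encoded.
\item Each augmented query is compared with all $n$ candidates in $O(nd)$, plus $O(n\bar L)$ for title matching. Keeping the best $b$ extensions costs $O(bn\log(1+b))$ per hop.
\end{itemize}
This gives
$T_{\mathrm{MH}}=O\bigl((H-1)b(T_q^{(H)}+nd+n\bar L)+Hbn\log(1+b)\bigr)$,
which is $O(nd)$ once $H$, $b$, the encoder input limits, and $\bar L$ are bounded. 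Your target bound would survive even your pessimistic model, provided encodings have constant cost. As written, however, it rests on an unverified assumption about a structure the algorithm does not have.

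\textbf{Closing the simplification.} Add the remaining preprocessing terms:
\begin{itemize}
\item $T_{\mathrm{text}}=O(n)$ under bounded sentence lengths; you did not address text processing at all.
\item $O(n)$ constant-cost encodings.
\item $O(nd)$ single-hop similarities.
\end{itemize}
Together these give $T_{\mathrm{prep}}=O(nd)\subseteq O(n^2d)$. So the multi-hop stage is linear in $n$, no quadratic accounting of the beam search is needed, and the $n^2d$ term comes from the dense kernel alone. With $n\kappa\bar L=O(n\kappa)\subseteq O(n\kappa^3)$, this yields $O(n^2d+n\kappa^3)$.
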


\section{Experiments}
\cs{We evaluate the downstream performance and compression overhead of PC-SubMax across multiple tasks, token budgets, and inference models. The experiments assess the usefulness of the selected prompts, while component ablations examine how coverage, diversity, and relevance contribute to downstream performance. Timing comparisons measure the online cost of compression.}

\subsection{Experimental Setup}
\label{sec:experiments}

\textbf{Datasets.} Following PartPrompt's task settings~\citep{mao2025parse}, we evaluate seven benchmarks: text and code summarization (Truncated ArXiv, People Daily, CodeNet~\citep{puri2021codenet}); mathematical reasoning (GSM8K~\citep{cobbe2021training}); multi-hop QA (HotpotQA~\citep{yang2018hotpotqa}); and long-context understanding (LongBench~\citep{bai2024longbench}, RULER~\citep{hsieh2024ruler}).

\textbf{Baselines.} We compare five \cs{representative prompt-compression baselines with publicly available implementations}. Our test suite includes Selective-Context~\citep{li2023compressing} (information-theoretic filtering), LLMLingua~\citep{jiang2023llmlingua} and LongLLMLingua~\citep{jiang2024longllmlingua} (perplexity-driven compression), LLMLingua-2~\citep{pan2024llmlingua2} (token-level classification via knowledge distillation), and PartPrompt~\citep{mao2025parse} (hierarchical structure-aware pruning).

\textbf{Evaluation Metrics.} Following PartPrompt~\citep{mao2025parse}, we report BLEU, ROUGE-1/2/L, and BERTScore-F1 (BS-F1) for Truncated ArXiv, People Daily, and CodeNet summarization. HotpotQA additionally uses answer precision, recall, and F1; GSM8K uses Exact Match (EM); LongBench and RULER use aggregate evaluation scores.

\textbf{Implementation Details.}\footnote{Our code and reproduction instructions are available at an anonymous repository: \url{https://anonymous.4open.science/r/PC-SubMax-89DD14321}.} \cs{Qwen3-VL-32B-Instruct is the default downstream LLM; the cross-model experiment additionally uses Llama-3-8B-Instruct. Sentence embeddings are obtained from the pretrained \texttt{multilingual-e5-small} encoder and normalized to form the Gram kernel $K_{ij}=z_i^\top z_j$. Table~\ref{tab:weights} lists the coefficients of the utility components and token penalty. The task-specific weights are selected by grid search on the validation split and are fixed across all test instances and compression ratios for each task. We use multi-hop relevance with $H=2$ on HotpotQA and single-hop relevance elsewhere when $\lambda_{\mathrm{rel}}>0$; configurations with $\lambda_{\mathrm{rel}}=0$ do not use relevance in the objective. Compression uses the full-scan RGM in Algorithm~\ref{alg:pc_submax}, with fixed tie-breaking by sentence index. All compression runs are performed on a single NVIDIA A100 GPU with 40GB of memory. Default downstream inference accesses Qwen3-VL-32B-Instruct through an API.}

\subsection{Experimental Results}
\cs{We examine compression quality across tasks, performance at different token-retention ratios, compression time, transfer across two inference models, and component ablations. Additional results appear in Appendix~\ref{sec:appendix_summarization}. Across the tables, \textit{CR} denotes the requested fraction of tokens retained, so CR$=0.2$ requests retention of 20\% of the original tokens. The reported $1/\tau$ is the achieved length-reduction factor (original length divided by compressed length), and \textit{Tokens} is the average compressed length. Larger $1/\tau$ therefore indicates stronger compression. We report achieved lengths alongside target ratios to make differences in token usage visible.}

\begin{table}[!htbp]
\caption{Comprehensive results on CodeNet and GSM8K at different compression ratios. Best results are shown in \textbf{bold}.}
\label{tab:code_gsm8k}
\centering
\footnotesize
\resizebox{\textwidth}{!}{
\begin{tabular}{@{}l c  ccccccc ccc@{}}
\toprule
\multirow{2}{*}{\textbf{Method}} & \multirow{2}{*}{\textbf{CR}}
 & \multicolumn{7}{c}{\textbf{CodeNet}} & \multicolumn{3}{c}{\textbf{GSM8K}} \\
\cmidrule(lr){3-9} \cmidrule(lr){10-12}
 & & \textbf{$1/\tau$} & \textbf{Tokens} & \textbf{BLEU} & \textbf{R-1} & \textbf{R-2} & \textbf{R-L} & \textbf{BS-F1}
   & \textbf{EM} & \textbf{$1/\tau$} & \textbf{Tokens} \\
\midrule
\multirow{3}{*}{Selective-Context}
 & 0.2 & 4.53 & 335.10 & 3.21  & 25.42 & 7.04  & 18.93 & 83.34 & 87.06 & 4.28 & 114 \\
 & 0.3 & 3.27 & 464.22 & 5.53  & 32.00 & 8.26  & 17.90 & 82.93 & 92.16 & 2.84 & 172 \\
 & 0.5 & 1.89 & 803.17 & 11.46 & 41.72 & 14.63 & 23.99 & 84.67 & 90.20 & 1.60 & 305 \\
\midrule
\multirow{3}{*}{LLMLingua}
 & 0.2 & 5.02 & 302.39 & 4.96  & 31.50 & 8.10  & 17.47 & 82.24 & 86.87 & 6.18 & 79  \\
 & 0.3 & 3.41 & 445.16 & 6.72  & 34.23 & 10.03 & 20.06 & 83.38 & 85.22 & 3.87 & 126 \\
 & 0.5 & 2.00 & 759.00 & 11.99 & 42.18 & 16.11 & 24.73 & 85.24 & 87.10 & 2.28 & 214 \\
\midrule
\multirow{3}{*}{LongLLMLingua}
 & 0.2 & 5.23 & 290.25 & 3.89  & 29.71 & 7.08  & 15.92 & 82.75 & 81.57 & 5.81 & 84  \\
 & 0.3 & 3.27 & 464.22 & 6.78  & 33.70 & 8.63  & 18.29 & 83.23 & 81.17 & 3.75 & 130 \\
 & 0.5 & 2.05 & 740.49 & 13.32 & 42.58 & 14.51 & 24.60 & 84.67 & 87.45 & 2.25 & 217 \\
\midrule
\multirow{3}{*}{LLMLingua-2}
 & 0.2 & 4.90 & 309.80 & 3.06  & 27.17 & 5.68  & 14.22 & 81.96 & 76.47 & 5.25 & 93  \\
 & 0.3 & 3.29 & 461.40 & 6.21  & 33.25 & 9.68  & 18.39 & 83.11 & 80.78 & 3.49 & 140 \\
 & 0.5 & 2.02 & 751.49 & 12.71 & 45.41 & 15.98 & 24.86 & 85.75 & 87.45 & 2.05 & 238 \\
\midrule
\multirow{3}{*}{PartPrompt}
 & 0.2 & 4.89 & 310.43 & 6.08  & 30.54 & 8.83  & 18.22 & 83.57 & 87.90 & 4.74 & 103 \\
 & 0.3 & 3.21 & 472.90 & 8.62  & 34.40 & 11.66 & 23.15 & 85.99 & 87.12 & 3.13 & 156 \\
 & 0.5 & 1.93 & 786.53 & 14.57 & 40.53 & 22.39 & 32.41 & 87.48 & 86.50 & 1.89 & 258 \\
\midrule
\multirow{3}{*}{\textbf{PC-SubMax (Ours)}}
 & 0.2 & 4.61 & 329.28 & \textbf{7.91}  & \textbf{36.33} & \textbf{10.73} & \textbf{20.59} & \textbf{85.60} & \textbf{91.77} & 5.08 & 96.06 \\
 & 0.3 & 3.12 & 486.54 & \textbf{10.16} & \textbf{39.94} & \textbf{13.40} & \textbf{23.25} & \textbf{87.08} & \textbf{94.90} & 3.34 & 146.11 \\
 & 0.5 & 1.96 & 774.50 & \textbf{17.44} & \textbf{47.38} & \textbf{24.96} & \textbf{34.99} & \textbf{90.92} & \textbf{94.12} & 2.01 & 242.79 \\
\bottomrule
\end{tabular}
}
\end{table}

\paragraph{Compression Quality across Tasks}
We evaluate PC-SubMax on seven diverse datasets across four distinct task categories. Table~\ref{tab:code_gsm8k} \cs{reports results on CodeNet and GSM8K. PC-SubMax has the highest reported quality scores among the compared methods at each target ratio in this table. Its GSM8K EM exceeds PartPrompt by 3.87, 7.78, and 7.62 percentage points at CR$=0.2$, $0.3$, and $0.5$, respectively. On CodeNet at CR$=0.5$, the BLEU improvement over PartPrompt is 2.87 points.} Results for all seven datasets are reported in Table~\ref{tab:code_gsm8k} and Tables~\ref{tab:arxiv_results}--\ref{tab:longbench_ruler_main}. \cs{These results support the empirical effectiveness of the selected prompts across the evaluated tasks.}

\paragraph{Robustness across Compression Ratios}
\cs{Table~\ref{tab:arxiv_results} reports Truncated ArXiv results at target retention ratios of 0.2, 0.3, and 0.5; Figure~\ref{fig:ratio_analysis} presents the compression-ratio comparison. Across the three tabulated ratios, PC-SubMax's BLEU, ROUGE, and BERTScore increase as more tokens are retained. It has the highest reported scores except for ROUGE-L at CR$=0.3$, where PartPrompt is ahead by 0.06 points. These observations describe the measured trade-off between compressed length and downstream performance.}

\begin{figure}[!t]
\centering
\safeincludegraphics[width=1.0\linewidth]{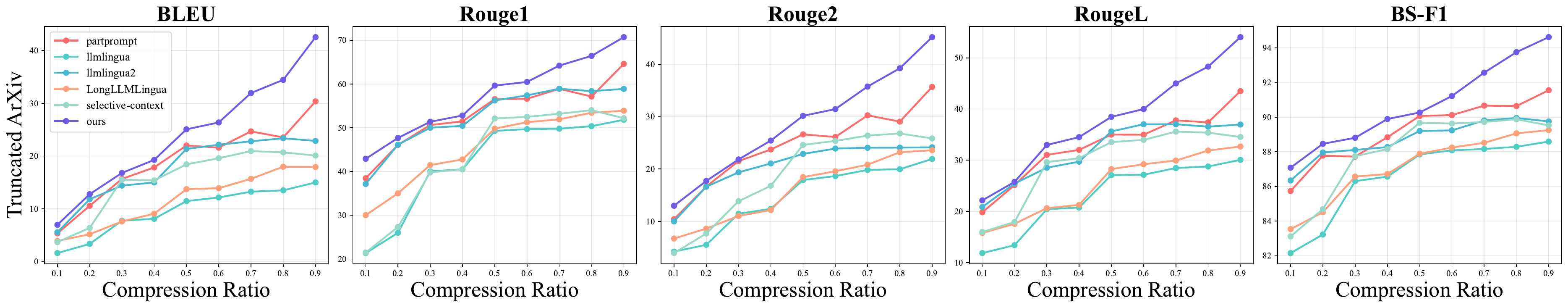}
\caption{Performance comparison on Truncated ArXiv across different compression ratios.}
\label{fig:ratio_analysis}
\end{figure}

\paragraph{Compression Efficiency}
\cs{Table~\ref{tab:time} reports wall-clock compression time on six benchmarks. PC-SubMax is the fastest of the evaluated methods on four benchmarks and is faster than the perplexity-based baselines wherever their timings are available. Relative to PartPrompt, the arithmetic mean of the six per-benchmark speedup ratios is $34.8\times$, ranging from $1.5\times$ on HotpotQA to $124.1\times$ on RULER. LLMLingua-2 is faster on LongBench and RULER by factors of $1.7$ and $1.8$, respectively. PC-SubMax achieves these compression times using pretrained encoder representations and incremental subset selection, without compression-specific model training or causal-language-model scoring.}
\begin{table}[!htbp]
\caption{\cs{Wall-clock compression time in seconds for the evaluated workloads.}}
\label{tab:time}
\centering
\begingroup
\footnotesize
\resizebox{\linewidth}{!}
{
\begin{tabular}{l cccccc}
\toprule
\textbf{Method} & \textbf{Truncated ArXiv} & \textbf{HotpotQA} & \textbf{CodeNet} & \textbf{People Daily} & \textbf{LongBench} & \textbf{RULER} \\
\midrule
Selective-Context & 3046 & 9173 & 485 & 635 & -- & -- \\
LLMLingua        & 2235 & 5546 & 1033 & 1901 & -- & -- \\
LongLLMLingua    & 2268 & 5532 & 1042 & 1912 & -- & -- \\
LLMLingua-2      & 387 & 1045 & 181 & 134 & 367 & 321 \\
PartPrompt       & 3101 & 1304 & 2138 & 1267 & 14024 & 71491 \\
PC-SubMax (Ours) & 154 & 897 & 82 & 85 & 627 & 576 \\
\bottomrule
\end{tabular}
}
\endgroup
\end{table}

\paragraph{Cross-LLM Generalization}
\cs{We evaluate compressed prompts with Qwen3-VL-32B-Instruct and Llama-3-8B-Instruct on Truncated ArXiv. As reported in Table~\ref{tab:different_llms} of Appendix~\ref{sec:app_llm}, PC-SubMax has the highest scores on all five metrics at all three target ratios for Llama-3-8B-Instruct. For Qwen3-VL-32B-Instruct, the only exception is ROUGE-L at CR$=0.3$. The results support transfer of the compression method across these two model families in the evaluated summarization setting.}

\paragraph{Ablation Study}
\cs{We vary the three semantic-utility components on HotpotQA while keeping $\lambda_{\mathrm{tok}}=0.05$ fixed. Table~\ref{tab:hotpotqa_ablation} in Appendix~\ref{sec:app_ablation} shows that the full configuration achieves the highest answer F1 at every target ratio. Removing relevance yields the largest F1 reductions: 22.09, 31.36, and 30.89 points at CR$=0.2$, $0.3$, and $0.5$, respectively. Removing coverage reduces F1 by 1.50--6.09 points, while removing diversity reduces it by 0.64--4.65 points. These comparisons support the complementary roles of the utility components, with query relevance contributing most strongly on this benchmark.}

\section{Conclusion}

\cs{We presented PC-SubMax, a sentence-level prompt-compression framework based on regularized monotone submodular maximization under a knapsack constraint. Its objective balances lexical coverage, embedding diversity, and query relevance against a modular token penalty. The deterministic RGM algorithm achieves a $(1/2,1)$ regularized approximation guarantee with $O(n\kappa)$ value-oracle queries, and its lazy variant achieves a $(1/2-\varepsilon,1)$ guarantee with $O((n/\varepsilon)[1+\log(B/(\varepsilon c_{\min}))])$ queries. Experiments with full-scan RGM show competitive downstream performance across seven benchmarks and the lowest measured compression time on four of the six timing benchmarks. Future work will investigate relevance representations for broader tasks and implementations that reduce the cost of dense kernel construction and diversity-marginal evaluation.}

\clearpage

\bibliographystyle{plainnat}
\bibliography{references}

@article{brown2020language,
  title={Language models are few-shot learners},
  author={Brown, Tom and Mann, Benjamin and Ryder, Nick and Subbiah, Melanie and Kaplan, Jared D and Dhariwal, Prafulla and Neelakantan, Arvind and Shyam, Pranav and Sastry, Girish and Askell, Amanda and others},
  journal={Advances in Neural Information Processing Systems (NeurIPS)},
  volume={33},
  pages={1877--1901},
  year={2020}
}

@article{gong2024budget,
  title={Budget-constrained profit maximization without non-negative objective assumption in social networks},
  author={Gong, Suning and Nong, Qingqin and Wang, Yue and Du, Dingzhu},
  journal={Journal of Global Optimization (JGO)},
  volume={90},
  number={4},
  pages={1007--1030},
  year={2024}
}

@article{guo2026efficient,
  title={Efficient Algorithms for Budgeted Profit Maximization With Theoretical Guarantees},
  author={Guo, Qintian and Feng, Chen and Shi, Jieming and Tang, Jing and Zhou, Xiaofang and Wang, Sibo},
  journal={IEEE Transactions on Knowledge and Data Engineering (TKDE)},
  year={2026}
}

@inproceedings{vaswani2017attention,
  title={Attention is all you need},
  author={Vaswani, Ashish and Shazeer, Noam and Parmar, Niki and Uszkoreit, Jakob and Jones, Llion and Gomez, Aidan N and Kaiser, Lukasz and Polosukhin, Illia},
  booktitle={Advances in Neural Information Processing Systems (NeurIPS)},
  pages={5998--6008},
  year={2017}
}

@article{liu2024lost,
  title={Lost in the middle: How language models use long contexts},
  author={Liu, Nelson F and Lin, Kevin and Hewitt, John and Paranjape, Ashwin and Bevilacqua, Michele and Petroni, Fabio and Liang, Percy},
  journal={Transactions of the Association for Computational Linguistics (TACL)},
  volume={12},
  pages={157--173},
  year={2024}
}

@inproceedings{shi2023large,
  title={Large language models can be easily distracted by irrelevant context},
  author={Shi, Freda and Chen, Xinyun and Misra, Kanishka and Scales, Nathan and Dohan, David and Chi, Ed H and Sch{\"a}rli, Nathanael and Zhou, Denny},
  booktitle={International Conference on Machine Learning (ICML)},
  pages={31210--31227},
  year={2023}
}

@inproceedings{li2023compressing,
  title={Compressing context to enhance inference efficiency of large language models},
  author={Li, Yucheng and Dong, Bo and Guerin, Frank and Lin, Chenghua},
  booktitle={Conference on Empirical Methods in Natural Language Processing (EMNLP)},
  pages={6342--6353},
  year={2023}
}

@inproceedings{jiang2023llmlingua,
  title={Llmlingua: Compressing prompts for accelerated inference of large language models},
  author={Jiang, Huiqiang and Wu, Qianhui and Lin, Chin-Yew and Yang, Yuqing and Qiu, Lili},
  booktitle={Conference on Empirical Methods in Natural Language Processing (EMNLP)},
  pages={13358--13376},
  year={2023}
}

@inproceedings{jiang2024longllmlingua,
  title={Longllmlingua: Accelerating and enhancing llms in long context scenarios via prompt compression},
  author={Jiang, Huiqiang and Wu, Qianhui and Luo, Xufang and Li, Dongsheng and Lin, Chin-Yew and Yang, Yuqing and Qiu, Lili},
  booktitle={Annual Meeting of the Association for Computational Linguistics (ACL)},
  pages={1658--1677},
  year={2024}
}

@inproceedings{pan2024llmlingua2,
  title={Llmlingua-2: Data distillation for efficient and faithful task-agnostic prompt compression},
  author={Pan, Zhuoshi and Wu, Qianhui and Jiang, Huiqiang and Xia, Menglin and Luo, Xufang and Zhang, Jue and Lin, Qingwei and R{\"u}hle, Victor and Yang, Yuqing and Lin, Chin-Yew and others},
  booktitle={Findings of the Association for Computational Linguistics (Findings of ACL)},
  pages={963--981},
  year={2024}
}

@article{mao2025parse,
  title={Parse trees guided LLM prompt compression},
  author={Mao, Wenhao and Hou, Chengbin and Zhang, Tianyu and Lin, Xinyu and Tang, Ke and Lv, Hairong},
  journal={IEEE Transactions on Pattern Analysis and Machine Intelligence (TPAMI)},
  year={2025}
}

@inproceedings{chuang2024learning,
  title={Learning to compress prompt in natural language formats},
  author={Chuang, Yu-Neng and Xing, Tianwei and Chang, Chia-Yuan and Liu, Zirui and Chen, Xun and Hu, Xia},
  booktitle={Conference of the North American Chapter of the Association for Computational Linguistics: Human Language Technologies (NAACL-HLT)},
  pages={7756--7767},
  year={2024}
}

@article{mu2023learning,
  title={Learning to compress prompts with gist tokens},
  author={Mu, Jesse and Li, Xiang and Goodman, Noah},
  journal={Advances in Neural Information Processing Systems (NeurIPS)},
  volume={36},
  pages={19327--19352},
  year={2023}
}

@article{ge2023context,
  title={In-context autoencoder for context compression in a large language model},
  author={Ge, Tao and Hu, Jing and Wang, Lei and Wang, Xun and Chen, Si-Qing and Wei, Furu},
  journal={arXiv preprint arXiv:2307.06945},
  year={2023}
}

@inproceedings{lin2011class,
  title={A class of submodular functions for document summarization},
  author={Lin, Hui and Bilmes, Jeff},
  booktitle={Annual Meeting of the Association for Computational Linguistics (ACL)},
  pages={510--520},
  year={2011}
}

@article{kulesza2012determinantal,
  title={Determinantal point processes for machine learning},
  author={Kulesza, Alex and Taskar, Ben},
  journal={Foundations and Trends in Machine Learning},
  volume={5},
  number={2-3},
  pages={123--286},
  year={2012}
}

@inproceedings{mirzasoleiman2016fast,
  title={Fast constrained submodular maximization: Personalized data summarization},
  author={Mirzasoleiman, Baharan and Badanidiyuru, Ashwinkumar and Karbasi, Amin},
  booktitle={International Conference on Machine Learning (ICML)},
  pages={1358--1367},
  year={2016}
}

@inproceedings{wei2015submodularity,
  title={Submodularity in data subset selection and active learning},
  author={Wei, Kai and Iyer, Rishabh and Bilmes, Jeff},
  booktitle={International Conference on Machine Learning (ICML)},
  pages={1954--1963},
  year={2015}
}

@inproceedings{iyer2021submodular,
  title={Submodular combinatorial information measures with applications in machine learning},
  author={Iyer, Rishabh and Khargoankar, Ninad and Bilmes, Jeff and Asanani, Himanshu},
  booktitle={Algorithmic Learning Theory (ALT)},
  pages={722--754},
  year={2021}
}

@inproceedings{yang2018hotpotqa,
  title={HotpotQA: A dataset for diverse, explainable multi-hop question answering},
  author={Yang, Zhilin and Qi, Peng and Zhang, Saizheng and Bengio, Yoshua and Cohen, William and Salakhutdinov, Ruslan and Manning, Christopher D},
  booktitle={Conference on Empirical Methods in Natural Language Processing (EMNLP)},
  pages={2369--2380},
  year={2018}
}

@inproceedings{bai2024longbench,
  title={Longbench: A bilingual, multitask benchmark for long context understanding},
  author={Bai, Yushi and Lv, Xin and Zhang, Jiajie and Lyu, Hongchang and Tang, Jiankai and Huang, Zhidian and Du, Zhengxiao and Liu, Xiao and Zeng, Aohan and Hou, Lei and others},
  booktitle={Annual Meeting of the Association for Computational Linguistics (ACL)},
  pages={3119--3137},
  year={2024}
}

@article{hsieh2024ruler,
  title={RULER: What's the real context size of your long-context language models?},
  author={Hsieh, Cheng-Ping and Sun, Simeng and Kriman, Samuel and Acharya, Shantanu and Rekesh, Dima and Jia, Fei and Zhang, Yang and Ginsburg, Boris},
  journal={arXiv preprint arXiv:2404.06654},
  year={2024}
}

@article{cobbe2021training,
  title={Training verifiers to solve math word problems},
  author={Cobbe, Karl and Kosaraju, Vineet and Bavarian, Mohammad and Chen, Mark and Jun, Heewoo and Kaiser, Lukasz and Plappert, Matthias and Tworek, Jerry and Hilton, Jacob and Nakano, Reiichiro and others},
  journal={arXiv preprint arXiv:2110.14168},
  year={2021}
}

@article{puri2021codenet,
  title={Codenet: A large-scale ai for code dataset for learning a diversity of coding tasks},
  author={Puri, Ruchir and Kung, David S and Janssen, Geert and Zhang, Wei and Domeniconi, Giacomo and Zolotov, Vladimir and Dolby, Julian and Chen, Jie and Choudhury, Mihir and Decker, Lindsey and others},
  journal={arXiv preprint arXiv:2105.12655},
  year={2021}
}

@article{tang2026comi,
  title={COMI: Coarse-to-fine Context Compression via Marginal Information Gain},
  author={Tang, Jiwei and Liu, Shilei and Zhang, Zhicheng and Yuan, Yujin and Zheng, Libin and Su, Wenbo and Zheng, Bo},
  journal={arXiv preprint arXiv:2602.01719},
  year={2026}
}

@article{hu2026dynamic,
  title={Dynamic Prompt Compression for Efficient Inference of Large Language Models},
  author={Hu, Jinwu and Zhang, Wei and Wang, Yufeng and Hu, Yu and Xiao, Bin and Tan, Mingkui and Du, Qing},
  journal={IEEE Transactions on Knowledge and Data Engineering (TKDE)},
  year={2026}
}

@article{tang2026read,
  title={Read As Human: Compressing Context via Parallelizable Close Reading and Skimming},
  author={Tang, Jiwei and Liu, Shilei and Zhang, Zhicheng and Lv, Qingsong and Zhao, Runsong and Lu, Tingwei and Liu, Langming and Chen, Haibin and Yuan, Yujin and Zheng, Hai-Tao and others},
  journal={arXiv preprint arXiv:2602.01840},
  year={2026}
}

@article{yang2026less,
  title={Less is more: Docstring compression in code generation},
  author={Yang, Guang and Zhou, Yu and Cheng, Wei and Zhang, Xiangyu and Chen, Xiang and Zhuo, Terry Yue and Liu, Ke and Zhou, Xin and Lo, David and Chen, Taolue},
  journal={ACM Transactions on Software Engineering and Methodology (TOSEM)},
  volume={35},
  number={2},
  pages={1--31},
  year={2026}
}

@article{atandoh2026less,
  title={Less is More: Adaptive Prompt Compression and Exemplar Selection for Efficient Few-Shot Sentiment Analysis},
  author={Atandoh, Peter and Li, Yongkang and Guo, Weikang and Guo, Jinyu and Zou, Jie},
  journal={Expert Systems with Applications},
  pages={132138},
  year={2026}
}

@article{wang2026alleviating,
  title={Alleviating Contextual Misguidance: Response-Aware Prompt Compression for Long-Context Question Answering},
  author={Wang, Haoyuan and Wang, Zhen and Zhou, Wenmeng and Deng, Yang},
  journal={IEEE/ACM Transactions on Audio, Speech, and Language Processing (TASLP)},
  year={2026}
}

@article{lewis2020retrieval,
  title={Retrieval-augmented generation for knowledge-intensive nlp tasks},
  author={Lewis, Patrick and Perez, Ethan and Piktus, Aleksandra and Petroni, Fabio and Karpukhin, Vladimir and Goyal, Naman and K{\"u}ttler, Heinrich and Lewis, Mike and Yih, Wen-tau and Rockt{\"a}schel, Tim and others},
  journal={Advances in Neural Information Processing Systems (NeurIPS)},
  volume={33},
  pages={9459--9474},
  year={2020}
}

@article{ouyang2022training,
  title={Training language models to follow instructions with human feedback},
  author={Ouyang, Long and Wu, Jeffrey and Jiang, Xu and Almeida, Diogo and Wainwright, Carroll and Mishkin, Pamela and Zhang, Chong and Agarwal, Sandhini and Slama, Katarina and Ray, Alex and others},
  journal={Advances in Neural Information Processing Systems (NeurIPS)},
  volume={35},
  pages={27730--27744},
  year={2022}
}

@article{touvron2023llama,
  title={Llama: Open and efficient foundation language models},
  author={Touvron, Hugo and Lavril, Thibaut and Izacard, Gautier and Martinet, Xavier and Lachaux, Marie-Anne and Lacroix, Timoth{\'e}e and Rozi{\`e}re, Baptiste and Goyal, Naman and Hambro, Eric and Azhar, Faisal and others},
  journal={arXiv preprint arXiv:2302.13971},
  year={2023}
}

@article{lin2026docsage,
  title={DocSage: An Information Structuring Agent for Multi-Doc Multi-Entity Question Answering},
  author={Lin, Teng and Zhu, Yizhang and Zhang, Zhengxuan and Luo, Yuyu and Tang, Nan},
  journal={arXiv preprint arXiv:2603.11798},
  year={2026}
}

@inproceedings{yaroslavtsev2020bring,
  title={“bring your own greedy”+ max: near-optimal 1/2-approximations for submodular knapsack},
  author={Yaroslavtsev, Grigory and Zhou, Samson and Avdiukhin, Dmitrii},
  booktitle={International Conference on Artificial Intelligence and Statistics},
  pages={3263--3274},
  year={2020},
  organization={PMLR}
}

@inproceedings{nikolakaki2021efficient,
  title={An efficient framework for balancing submodularity and cost},
  author={Nikolakaki, Sofia Maria and Ene, Alina and Terzi, Evimaria},
  booktitle={Proceedings of the 27th ACM SIGKDD Conference on Knowledge Discovery \& Data Mining},
  pages={1256--1266},
  year={2021}
}

@inproceedings{liskavets2025prompt,
  title     = {Prompt Compression with Context-Aware Sentence Encoding for Fast and Improved {LLM} Inference},
  author    = {Liskavets, Barys and Ushakov, Maxim and Roy, Shuvendu and Klibanov, Mark and Etemad, Ali and Luke, Shane K.},
  booktitle = {Proceedings of the AAAI Conference on Artificial Intelligence},
  volume    = {39},
  number    = {23},
  pages     = {24595--24604},
  year      = {2025},
  doi       = {10.1609/aaai.v39i23.34639}
}

\clearpage
\appendix
\section*{Appendix}

\section{Additional Related Work}

\subsection{Prompt Compression Techniques}
\cs{Prompt compression reduces the length of inputs supplied to an LLM. A useful distinction between white-box and black-box methods concerns their access requirements for the target model.}

\paragraph{White-box Prompt Compression.}
\cs{Methods that encode context into model-specific soft prompts require an interface that accepts continuous representations or modifications to the target model~\citep{mu2023learning,ge2023context,tang2026comi}. COMI~\citep{tang2026comi} uses marginal information gain to account for query relevance and redundancy, while RAM~\citep{tang2026read} combines detailed processing of relevant passages with compressed representations of background context. Such representations generally require access beyond a standard text-only interface. Another line of work uses response-dependent signals for selection: LOOC~\citep{wang2026alleviating} estimates the influence of prompt segments on model outputs to identify useful content. Its access requirements depend on the response signals used by the implementation.}

\paragraph{Black-box Prompt Compression.}
\cs{Black-box compressors produce text that can be passed to the target model through its usual input interface. Generative methods rewrite or summarize the context~\citep{chuang2024learning}; selective methods retain source tokens, phrases, or sentences. Rewriting can change factual details, whereas extraction preserves the wording of the retained content.}

\cs{Among selective methods, Selective-Context~\citep{li2023compressing} uses self-information, LLMLingua~\citep{jiang2023llmlingua} uses language-model scores and iterative compression, and LLMLingua-2~\citep{pan2024llmlingua2} uses a contextual token classifier. PartPrompt~\citep{mao2025parse} incorporates linguistic structure through hierarchical tree construction and budgeted pruning. LLM-DPC~\citep{hu2026dynamic} learns a sequential compression policy using PPO. Query-aware methods include LongLLMLingua~\citep{jiang2024longllmlingua}, which uses contrastive perplexity, and CPC~\citep{liskavets2025prompt}, which scores sentence relevance with a contrastively trained context-aware encoder. ShortenDoc~\citep{yang2026less} and SR-C3~\citep{atandoh2026less} study compression tailored to code generation and few-shot sentiment analysis, respectively.}

\subsection{Submodular Optimization}

Submodular functions naturally capture the property of diminishing marginal returns: the marginal gain of adding an element \cs{does not increase} as the set grows. This property makes submodular optimization a principled framework for combinatorial subset selection problems under resource constraints, with broad applications spanning active data selection~\citep{wei2015submodularity}, core-set extraction~\citep{iyer2021submodular}, extractive document summarization~\citep{lin2011class}, and diversity-aware subset selection via Determinantal Point Processes (DPPs)~\citep{kulesza2012determinantal} and facility-location functions~\citep{mirzasoleiman2016fast}. For non-negative monotone submodular maximization under a knapsack constraint, Greedy+Max augments every partial density-greedy solution with its best additional feasible item and obtains a one-half approximation with $O(n\kappa)$ oracle queries~\citep{yaroslavtsev2020bring}. A related line of work studies regularized objectives $U(S)-\ell(S)$, where $U$ is monotone submodular and $\ell$ is non-negative modular; cost-scaled greedy methods use marginals of the form $U(e\mid S)-2\ell_e$ and provide guarantees that separately retain a fraction of utility and subtract the full cost~\citep{nikolakaki2021efficient}. \cs{Under a knapsack constraint, PartEnum-Twin-Greedy provides a $(1/4,1)$ guarantee with $O(n^4)$ value-oracle queries~\citep{gong2024budget}. More recently, GEAR provides a $((1-1/e)/2,1/2)$ guarantee for budgeted profit maximization when the modular penalty also defines budget consumption~\citep{guo2026efficient}. Its utility and penalty coefficients express a different trade-off from RGM: RGM retains a larger utility coefficient, whereas GEAR subtracts a smaller fraction of the comparator penalty. RGM establishes a $(1/2,1)$ guarantee for arbitrary non-negative modular penalties and independent positive knapsack weights.}

\section{Additional Experimental Results}
\label{sec:appendix_summarization}

\cs{This appendix reports the task-specific weights, long-context results, additional summarization and question-answering results, cross-model comparisons, and semantic-component ablations that accompany the main evaluation.}

\subsection{Weight Configuration}\label{sec:app_weights} \cs{Table~\ref{tab:weights} lists the coefficients of the three utility components and the token penalty. Each dataset or task type uses the same listed weights across the reported compression ratios. The summarization configurations set $\lambda_{\mathrm{rel}}=0$, whereas question-answering configurations place greater weight on relevance.}

\begin{table}[!htbp]
\centering
\caption{\cs{Task-specific coefficients of the regularized objective.}}
\label{tab:weights}
\small
\setlength{\tabcolsep}{3pt}
{%
\begin{tabular}{@{}lcccc@{\qquad}lcccc@{}}
\toprule
\multicolumn{5}{c}{\textbf{Datasets}} & \multicolumn{5}{c}{\textbf{LongBench Tasks}} \\
\cmidrule(lr){1-5} \cmidrule(lr){6-10}
\textbf{Task} & $\lambda_{\mathrm{cov}}$ & $\lambda_{\mathrm{div}}$ & $\lambda_{\mathrm{rel}}$ & $\lambda_{\mathrm{tok}}$ &
\textbf{Task Type} & $\lambda_{\mathrm{cov}}$ & $\lambda_{\mathrm{div}}$ & $\lambda_{\mathrm{rel}}$ & $\lambda_{\mathrm{tok}}$ \\
\midrule
Truncated ArXiv & 0.50 & 0.50 & 0.00 & 0.10 & QA             & 0.25 & 0.10 & 0.60 & 0.05 \\
People Daily   & 0.50 & 0.50 & 0.00 & 0.05 & Summary        & 0.55 & 0.45 & 0.00 & 0.10 \\
CodeNet        & 0.50 & 0.50 & 0.00 & 0.05 & Retrieval      & 0.35 & 0.25 & 0.40 & 0.10 \\
HotpotQA       & 0.25 & 0.10 & 0.65 & 0.05 & Code           & 0.60 & 0.25 & 0.15 & 0.05 \\
GSM8K          & 0.50 & 0.35 & 0.15 & 0.05 & Classification & 0.35 & 0.15 & 0.50 & 0.10 \\
RULER          & 0.25 & 0.10 & 0.65 & 0.05 & Counting       & 0.60 & 0.25 & 0.15 & 0.10 \\
\bottomrule
\end{tabular}%
}
\end{table}


\subsection{Extended Summarization Benchmarks}

\paragraph{Performance on Truncated ArXiv.}
Table~\ref{tab:arxiv_results} presents the performance of different prompt compression methods on the Truncated ArXiv dataset. As shown in the table, PC-SubMax achieves the best result on every metric except ROUGE-L at CR$=0.3$, where PartPrompt is ahead by 0.06 points. This indicates that our method preserves the core semantic information of long academic texts well across compression levels.

\begin{table}[!htbp]
\centering
\caption{Results on Truncated ArXiv at different compression ratios. Best results are shown in \textbf{bold}.}
\label{tab:arxiv_results}
\footnotesize
\begin{tabular}{@{}l c c c c c c c c@{}}
\toprule
\textbf{Method} & \textbf{CR} & \textbf{$1/\tau$} & \textbf{Tokens} & \textbf{BLEU} & \textbf{R-1} & \textbf{R-2} & \textbf{R-L} & \textbf{BS-F1} \\
\midrule
\multirow{3}{*}{Selective-Context}
& 0.2 & 4.75 & 631.03 & 6.39 & 27.33 & 7.68 & 17.93 & 84.69 \\
& 0.3 & 2.99 & 1001.74 & 15.51 & 39.79 & 13.85 & 29.62 & 87.74 \\
& 0.5 & 1.63 & 1844.51 & 18.43 & 52.13 & 24.60 & 33.53 & 89.67 \\
\midrule
\multirow{3}{*}{LLMLingua}
& 0.2 & 5.11 & 587.35 & 3.36 & 26.01 & 5.53 & 13.40 & 83.22 \\
& 0.3 & 3.42 & 875.64 & 7.76 & 40.04 & 11.46 & 20.42 & 86.31 \\
& 0.5 & 2.04 & 1467.58 & 11.47 & 49.28 & 17.90 & 27.09 & 87.85 \\
\midrule
\multirow{3}{*}{LongLLMLingua}
& 0.2 & 5.02 & 596.78 & 5.19 & 35.03 & 8.61 & 17.57 & 84.51 \\
& 0.3 & 3.37 & 889.46 & 7.61 & 41.49 & 11.03 & 20.63 & 86.57 \\
& 0.5 & 2.00 & 1499.76 & 13.73 & 49.80 & 18.46 & 28.26 & 87.89 \\
\midrule
\multirow{3}{*}{LLMLingua-2}
& 0.2 & 5.06 & 592.88 & 11.82 & 46.13 & 16.62 & 25.44 & 87.96 \\
& 0.3 & 3.26 & 920.24 & 14.41 & 50.01 & 19.36 & 28.53 & 88.11 \\
& 0.5 & 1.99 & 1507.53 & 21.36 & 56.24 & 22.88 & 35.62 & 89.20 \\
\midrule
\multirow{3}{*}{PartPrompt}
& 0.2 & 5.08 & 590.55 & 10.58 & 46.09 & 16.64 & 25.13 & 87.78 \\
& 0.3 & 3.41 & 879.77 & 15.67 & 50.62 & 21.51 & \cs{\textbf{31.04}} & 87.72 \\
& 0.5 & 2.01 & 1492.53 & 22.01 & 56.55 & 26.61 & 35.01 & 90.07 \\
\midrule
\multirow{3}{*}{PC-SubMax (Ours)}
& 0.2 & 4.96 & 604.84 & \textbf{12.78} & \textbf{47.67} & \textbf{17.72} & \textbf{25.76} & \textbf{88.46} \\
& 0.3 & 3.31 & 906.34 & \textbf{16.81} & \textbf{51.37} & \textbf{21.82} & 30.98 & \textbf{88.81} \\
& 0.5 & 1.99 & 1507.54 & \textbf{25.08} & \textbf{59.64} & \textbf{30.15} & \textbf{38.47} & \textbf{90.27} \\
\bottomrule
\end{tabular}
\end{table}

\paragraph{Results on People Daily.} \cs{People Daily provides a Chinese-language summarization setting. Table~\ref{tab:people_daily} shows that PC-SubMax has the highest reported ROUGE and BERTScore values at all three target ratios. It also has the highest BLEU at CR$=0.2$ and $0.5$; at CR$=0.3$, LLMLingua-2 is ahead by 0.13 points. These results extend the empirical evaluation beyond English-language inputs.}

\begin{table}[!htbp]
\centering
\caption{Results on People Daily at different compression ratios. Best results are shown in \textbf{bold}.}
\label{tab:people_daily}
\footnotesize
\begin{tabular}{@{}l c c c c c c c c@{}}
\toprule
\textbf{Method} & \textbf{CR} & \textbf{\(1/\tau\)} & \textbf{Tokens} & \textbf{BLEU} & \textbf{R-1} & \textbf{R-2} & \textbf{R-L} & \textbf{BS-F1} \\
\midrule
\multirow{3}{*}{Selective-Context}
& 0.2 & 4.96 & 474.80 & 8.33 & 29.34 & 12.13 & 28.63 & 66.52 \\
& 0.3 & 3.25 & 724.62 & 8.60 & 30.71 & 11.98 & 28.64 & 67.92 \\
& 0.5 & 1.95 & 1207.69 & 12.94 & 31.77 & 16.59 & 32.26 & 71.20 \\
\midrule
\multirow{3}{*}{LLMLingua}
& 0.2 & 5.14 & 458.17 & 2.71 & 23.57 & 7.43 & 23.80 & 62.02 \\
& 0.3 & 3.42 & 688.60 & 8.31 & 28.94 & 10.88 & 29.42 & 68.34 \\
& 0.5 & 2.05 & 1148.78 & 11.04 & 30.66 & 14.67 & 29.92 & 68.43 \\
\midrule
\multirow{3}{*}{LongLLMLingua}
& 0.2 & 5.10 & 461.76 & 2.98 & 22.19 & 6.82 & 21.58 & 59.56 \\
& 0.3 & 3.34 & 705.09 & 7.07 & 28.20 & 9.17 & 26.20 & 66.02 \\
& 0.5 & 2.05 & 1148.78 & 10.83 & 32.45 & 14.33 & 30.60 & 71.11 \\
\midrule
\multirow{3}{*}{LLMLingua-2}
& 0.2 & 4.79 & 491.65 & 9.20 & 32.52 & 13.49 & 30.28 & 70.09 \\
& 0.3 & 3.33 & 707.21 & \textbf{14.80} & 38.98 & 16.49 & 34.53 & 71.69 \\
& 0.5 & 2.02 & 1165.84 & 20.75 & 43.56 & 25.67 & 41.15 & 77.07 \\
\midrule
\multirow{3}{*}{PartPrompt}
& 0.2 & 5.03 & 468.19 & 10.56 & 31.66 & 12.09 & 29.42 & 68.34 \\
& 0.3 & 3.31 & 711.48 & 14.33 & 36.07 & 16.05 & 34.49 & 71.24 \\
& 0.5 & 2.01 & 1171.64 & 21.39 & 42.67 & 24.42 & 41.05 & 75.82 \\
\midrule
\multirow{3}{*}{PC-SubMax (Ours)}
& 0.2 & 5.13 & 459.06 & \textbf{11.01} & \textbf{34.04} & \textbf{18.54} & \textbf{33.52} & \textbf{73.52} \\
& 0.3 & 3.40 & 692.65 & 14.67 & \textbf{39.83} & \textbf{21.91} & \textbf{39.89} & \textbf{75.86} \\
& 0.5 & 2.04 & 1154.41 & \textbf{23.91} & \textbf{49.08} & \textbf{27.33} & \textbf{47.81} & \textbf{79.03} \\
\bottomrule
\end{tabular}
\end{table}

\subsection{Performance on HotpotQA}
As shown in Table~\ref{tab:hotpotqa_multi}, PC-SubMax achieves the best reported performance under all evaluated compression ratios. \cs{At CR$=0.2$, it obtains an answer F1 of 56.43, exceeding LLMLingua-2 by 1.47 points and PartPrompt by 12.03 points. It uses 279.82 tokens on average, compared with 298.09 and 299.38 tokens for these two baselines, respectively. This configuration uses the precomputed multi-hop relevance scores
defined in Section~\ref{sc:method}.}

\begin{table}[!htbp]
\caption{Results on HotpotQA at different compression ratios. Best results are shown in \textbf{bold}. R-1/2/L denotes ROUGE-1/2/L; P/R/F1 denotes precision/recall/F1.}
\label{tab:hotpotqa_multi}
\centering
\begingroup
\small
\setlength{\tabcolsep}{2.5pt}
\renewcommand{\arraystretch}{1.05}
{
\begin{tabular}{l c cc cccccccc}
\toprule
\textbf{Method} & \textbf{CR} & \textbf{\(1/\tau\)} & \textbf{Tokens} & \textbf{BLEU} & \textbf{R-1} & \textbf{R-2} & \textbf{R-L} & \textbf{BS-F1} & \textbf{P} & \textbf{R} & \textbf{F1} \\
\midrule
\multirow{3}{*}{Selective-Context}
& 0.2 & 4.87 & 318.70 & 35.41 & 38.81 & 17.82 & 38.74 & 89.12 & 43.01 & 43.21 & 40.60 \\
& 0.3 & 3.22 & 481.65 & 39.36 & 43.27 & 21.48 & 43.17 & 89.81 & 47.62 & 48.02 & 45.18 \\
& 0.5 & 1.99 & 780.84 & 48.68 & 53.34 & 27.75 & 53.20 & 91.53 & 58.39 & 59.12 & 55.77 \\
\midrule
\multirow{3}{*}{LLMLingua}
& 0.2 & 5.03 & 308.81 & 33.94 & 37.29 & 16.75 & 37.21 & 89.19 & 40.87 & 41.35 & 38.82 \\
& 0.3 & 3.11 & 499.28 & 35.00 & 38.48 & 17.35 & 38.39 & 89.43 & 41.99 & 42.13 & 39.96 \\
& 0.5 & 1.98 & 786.18 & 38.57 & 42.39 & 19.51 & 42.28 & 90.04 & 46.61 & 46.49 & 44.19 \\
\midrule
\multirow{3}{*}{LongLLMLingua}
& 0.2 & 5.01 & 309.80 & 34.75 & 38.35 & 17.62 & 38.29 & 89.12 & 41.88 & 42.67 & 39.85 \\
& 0.3 & 3.20 & 484.67 & 35.92 & 39.60 & 18.25 & 39.51 & 89.46 & 43.34 & 43.75 & 41.18 \\
& 0.5 & 1.99 & 780.15 & 40.23 & 44.31 & 20.78 & 44.21 & 90.24 & 48.46 & 48.62 & 46.02 \\
\midrule
\multirow{3}{*}{LLMLingua-2}
& 0.2 & 5.21 & 298.09 & 48.01 & 52.61 & 25.52 & 52.55 & 91.74 & 58.83 & 55.46 & 54.96 \\
& 0.3 & 3.34 & 464.67 & 56.10 & 61.06 & 31.00 & 61.02 & 92.92 & 67.27 & 64.39 & 63.59 \\
& 0.5 & 1.99 & 779.28 & 63.97 & 69.09 & 37.85 & 69.06 & 94.16 & 74.92 & 73.48 & 71.89 \\
\midrule
\multirow{3}{*}{PartPrompt}
& 0.2 & 5.19 & 299.38 & 39.67 & 43.04 & 19.84 & 42.98 & 89.34 & 45.46 & 49.58 & 44.40 \\
& 0.3 & 3.41 & 455.43 & 44.22 & 48.33 & 23.21 & 48.29 & 89.91 & 50.36 & 56.07 & 49.85 \\
& 0.5 & 1.99 & 781.24 & 51.69 & 56.12 & 27.93 & 56.00 & 91.52 & 59.00 & 64.02 & 57.99 \\
\midrule
\multirow{3}{*}{PC-SubMax (Ours)}
& 0.2 & 5.55 & 279.82 & \textbf{51.18} & \textbf{55.00} & \textbf{29.27} & \textbf{54.84} & \textbf{92.17} & \textbf{59.00} & \textbf{57.41} & \textbf{56.43} \\
& 0.3 & 3.70 & 419.73 & \textbf{62.25} & \textbf{65.57} & \textbf{34.01} & \textbf{65.57} & \textbf{94.25} & \textbf{69.26} & \textbf{66.32} & \textbf{67.29} \\
& 0.5 & 2.13 & 729.12 & \textbf{67.12} & \textbf{71.82} & \textbf{39.56} & \textbf{71.82} & \textbf{94.77} & \textbf{75.18} & \textbf{74.19} & \textbf{73.24} \\
\bottomrule
\end{tabular}
}
\endgroup
\end{table}

\subsection{Long-Context Understanding Tasks}
\cs{Table~\ref{tab:longbench_ruler_main} compares PC-SubMax with LLMLingua-2 and PartPrompt at the same target retention ratio of 0.2. PC-SubMax's LongBench aggregate score is 40.86, exceeding PartPrompt's 28.05 by 12.81 points (45.7\% relative). On the reported RULER evaluation, the corresponding scores are 97.2 and 96.0. Selective-Context, LLMLingua, and LongLLMLingua are not included in these comparisons because their input-length constraints prevent them from processing the long-context inputs used in LongBench and RULER under the same evaluation setting.}
\begin{table}[!htbp]
\caption{\cs{Results on LongBench and RULER at a target token-retention ratio of 0.2.}}
\label{tab:longbench_ruler_main}
\centering
\begingroup
\footnotesize
\begin{tabular}{l cc}
\toprule
\textbf{Method} & \textbf{RULER} & \textbf{LongBench} \\
\midrule
LLMLingua-2      & 70.4 & 25.93 \\
PartPrompt       & 96 & 28.05 \\
PC-SubMax (Ours) & \textbf{97.2} & \textbf{40.86} \\
\bottomrule
\end{tabular}
\endgroup
\end{table}



\subsection{Using Different LLMs for Inference}\label{sec:app_llm}
To evaluate whether the proposed compressor generalizes across different backbone language models, we feed the compressed prompts into Qwen3-VL-32B-Instruct and Llama-3-8B-Instruct on the Truncated ArXiv dataset. Table~\ref{tab:different_llms} reports BLEU, ROUGE, and BERTScore results under different token constraints. PC-SubMax achieves the best result in all tested settings except ROUGE-L on Qwen3-VL-32B-Instruct under the 30\% constraint, where PartPrompt is 0.06 points ahead. \cs{This comparison supports the effectiveness of the compression method across the two evaluated model families.}

\begin{table}[!htbp]
\caption{Performance of selective prompt compression methods on the Truncated ArXiv dataset when feeding compressed prompts to different LLMs.}
\label{tab:different_llms}
\centering
\begingroup
\footnotesize
\resizebox{\linewidth}{!}{%
\begin{tabular}{l ccccc ccccc}
\toprule
\multirow{2}{*}{\textbf{Method}} &
\multicolumn{5}{c}{\textbf{Qwen3-VL-32B-Instruct}} &
\multicolumn{5}{c}{\textbf{Llama-3-8B-Instruct}} \\
\cmidrule(lr){2-6} \cmidrule(lr){7-11}
& \textbf{BLEU} & \textbf{R-1} & \textbf{R-2} & \textbf{R-L} & \textbf{BS-F1} &
\textbf{BLEU} & \textbf{R-1} & \textbf{R-2} & \textbf{R-L} & \textbf{BS-F1} \\
\midrule
\multicolumn{11}{c}{\textit{20\% token constraint}} \\
\midrule
Selective-Context & 6.39 & 27.33 & 7.68 & 17.93 & 84.69 & 12.16 & 46.40 & 19.31 & 28.06 & 88.32 \\
LLMLingua        & 3.36 & 26.01 & 5.53 & 13.40 & 83.22 & 8.92 & 42.15 & 14.91 & 25.29 & 87.29 \\
LongLLMLingua    & 5.19 & 35.03 & 8.61 & 17.57 & 84.51 & 6.54 & 37.86 & 11.41 & 22.26 & 86.12 \\
LLMLingua-2      & 11.82 & 46.13 & 16.62 & 25.44 & 87.96 & 13.92 & 46.95 & 20.83 & 28.63 & 88.73 \\
PartPrompt       & 10.58 & 46.09 & 16.64 & 25.13 & 87.78 & 13.95 & 46.34 & 21.24 & 28.26 & 88.45 \\
\textbf{PC-SubMax (Ours)} & \textbf{12.78} & \textbf{47.67} & \textbf{17.72} & \textbf{25.76} & \textbf{88.46} & \textbf{16.31} & \textbf{49.46} & \textbf{23.12} & \textbf{29.98} & \textbf{88.84} \\
\midrule
\multicolumn{11}{c}{\textit{30\% token constraint}} \\
\midrule
Selective-Context & 15.51 & 39.79 & 13.85 & 29.62 & 87.74 & 14.98 & 50.53 & 22.46 & 30.63 & 88.92 \\
LLMLingua        & 7.76 & 40.04 & 11.46 & 20.42 & 86.31 & 11.51 & 45.84 & 16.67 & 26.56 & 87.89 \\
LongLLMLingua    & 7.61 & 41.49 & 11.03 & 20.63 & 86.57 & 9.30 & 44.12 & 14.41 & 24.85 & 87.56 \\
LLMLingua-2      & 14.41 & 50.01 & 19.36 & 28.53 & 88.11 & 17.65 & 52.25 & 23.08 & 30.30 & 89.08 \\
PartPrompt       & 15.67 & 50.62 & 21.51 & \cs{\textbf{31.04}} & 87.72 & 18.06 & 51.48 & 23.54 & 29.32 & 88.81 \\
\textbf{PC-SubMax (Ours)} & \textbf{16.81} & \textbf{51.37} & \textbf{21.82} & 30.98 & \textbf{88.81} & \textbf{20.04} & \textbf{53.64} & \textbf{25.86} & \textbf{32.86} & \textbf{89.87} \\
\midrule
\multicolumn{11}{c}{\textit{50\% token constraint}} \\
\midrule
Selective-Context & 18.43 & 52.13 & 24.60 & 33.53 & 89.67 & 22.63 & 55.67 & 27.48 & 34.08 & 89.54 \\
LLMLingua        & 11.47 & 49.28 & 17.90 & 27.09 & 87.85 & 13.92 & 48.06 & 18.23 & 27.19 & 88.29 \\
LongLLMLingua    & 13.73 & 49.80 & 18.46 & 28.26 & 87.89 & 12.90 & 47.95 & 17.61 & 26.72 & 88.03 \\
LLMLingua-2      & 21.36 & 56.24 & 22.88 & 35.62 & 89.20 & 22.74 & 56.49 & 27.42 & 33.26 & 89.78 \\
PartPrompt       & 22.01 & 56.55 & 26.61 & 35.01 & 90.07 & 24.41 & 56.46 & 28.27 & 33.98 & 89.53 \\
\textbf{PC-SubMax (Ours)} & \textbf{25.08} & \textbf{59.64} & \textbf{30.15} & \textbf{38.47} & \textbf{90.27} & \textbf{26.83} & \textbf{58.92} & \textbf{32.64} & \textbf{38.60} & \textbf{90.58} \\
\bottomrule
\end{tabular}%
}
\endgroup
\end{table}

\subsection{Ablation Study on HotpotQA}\label{sec:app_ablation}
To examine the contribution of each semantic-utility component, we conduct an ablation study on HotpotQA by varying $\lambda_{\mathrm{cov}}$, $\lambda_{\mathrm{div}}$, and $\lambda_{\mathrm{rel}}$ while holding $\lambda_{\mathrm{tok}}=0.05$ fixed. \cs{The full configuration has the highest answer F1 at all three target ratios. Removing coverage reduces F1 by 1.88, 6.09, and 1.50 points; removing diversity reduces it by 0.64, 4.65, and 0.90 points; and removing relevance reduces it by 22.09, 31.36, and 30.89 points, respectively. The relevance-only configuration slightly exceeds the full configuration on ROUGE-1 (55.04 vs.\ 55.00) and BS-F1 (92.20 vs.\ 92.17) at CR$=0.2$. Its answer F1 remains lower at every ratio.}

\begin{table}[!htbp]
\caption{Ablation study on HotpotQA: impact of semantic-utility components. Weights are shown as \((\lambda_{\mathrm{cov}}, \lambda_{\mathrm{div}}, \lambda_{\mathrm{rel}})\), with $\lambda_{\mathrm{tok}}=0.05$ fixed.}
\label{tab:hotpotqa_ablation}
\centering
\begingroup
\footnotesize
{%
\begin{tabular}{l c rrrrrrrr}
\toprule
\textbf{Weights} & \textbf{CR} & \textbf{BLEU} & \textbf{R-1} & \textbf{R-2} & \textbf{R-L} & \textbf{BS-F1} & \textbf{P} & \textbf{R} & \textbf{F1} \\
\midrule
\multirow{3}{*}{(0.25, 0.1, 0.65)} & 0.2 & 51.18 & 55.00 & 29.27 & 54.84 & 92.17 & 59.00 & 57.41 & 56.43 \\
& 0.3 & 62.25 & 65.57 & 34.01 & 65.57 & 94.25 & 69.26 & 66.32 & 67.29 \\
& 0.5 & 67.12 & 71.82 & 39.56 & 71.82 & 94.77 & 75.18 & 74.19 & 73.24 \\
\midrule
\multirow{3}{*}{(0.0, 0.1, 0.65)} & 0.2 & 51.07 & 54.53 & 26.67 & 54.53 & 92.09 & 55.84 & 55.77 & 54.55 \\
& 0.3 & 56.01 & 60.10 & 33.38 & 60.10 & 93.20 & 62.02 & 63.23 & 61.20 \\
& 0.5 & 66.73 & 70.81 & 36.78 & 70.81 & 94.68 & 72.74 & 72.88 & 71.74 \\
\midrule
\multirow{3}{*}{(0.25, 0.0, 0.65)} & 0.2 & 51.03 & 54.40 & 26.62 & 54.18 & 92.08 & 56.64 & 55.77 & 55.79 \\
& 0.3 & 57.34 & 60.65 & 32.37 & 60.65 & 93.50 & 63.52 & 62.50 & 62.64 \\
& 0.5 & 66.96 & 70.76 & 34.67 & 70.82 & 94.72 & 74.56 & 71.27 & 72.34 \\
\midrule
\multirow{3}{*}{(0.25, 0.1, 0.0)} & 0.2 & 31.72 & 33.71 & 15.22 & 33.49 & 90.16 & 33.98 & 40.05 & 34.34 \\
& 0.3 & 34.28 & 35.33 & 16.17 & 35.11 & 90.52 & 36.23 & 38.42 & 35.93 \\
& 0.5 & 39.86 & 41.51 & 21.62 & 41.51 & 90.92 & 43.14 & 44.88 & 42.35 \\
\midrule
\multirow{3}{*}{(0.0, 0.0, 0.65)} & 0.2 & 51.01 & 55.04 & 28.26 & 53.82 & 92.20 & 58.28 & 57.40 & 55.36 \\
& 0.3 & 58.98 & 62.29 & 34.01 & 62.29 & 93.70 & 65.16 & 64.14 & 64.28 \\
& 0.5 & 66.98 & 70.57 & 38.04 & 70.61 & 94.72 & 73.93 & 72.91 & 72.07 \\
\midrule
\multirow{3}{*}{(0.0, 0.1, 0.0)} & 0.2 & 38.99 & 40.62 & 20.71 & 40.62 & 91.04 & 43.47 & 43.92 & 42.24 \\
& 0.3 & 37.88 & 41.75 & 19.35 & 41.75 & 91.58 & 45.93 & 46.87 & 43.69 \\
& 0.5 & 48.05 & 50.26 & 24.98 & 50.26 & 92.50 & 55.46 & 55.20 & 53.59 \\
\midrule
\multirow{3}{*}{(0.25, 0.0, 0.0)} & 0.2 & 27.48 & 28.88 & 10.68 & 28.88 & 88.78 & 29.26 & 31.04 & 28.95 \\
& 0.3 & 29.82 & 31.07 & 16.60 & 30.95 & 89.93 & 31.64 & 38.45 & 31.99 \\
& 0.5 & 37.34 & 38.83 & 19.28 & 38.83 & 90.73 & 40.25 & 45.15 & 40.80 \\
\bottomrule
\end{tabular}%
}
\endgroup
\end{table}



\section{Details of the Multi-Hop Relevance Construction}
\label{sec:appendix_multihop}

This appendix gives the construction of the bridge score $b_i$
used in the multi-hop relevance score $r_i^{(\mathrm{MH})}$. All quantities below are computed once, before RGM starts, and are held fixed during subset selection.

\paragraph{Conditional relevance gain.}
Given a partial evidence path $p=(i_1,\ldots,i_{t-1})$, we construct the augmented query $q_p=q\oplus s_{i_1}\oplus\cdots\oplus s_{i_{t-1}}$, where $\oplus$ denotes textual concatenation with explicit hop indicators. The conditional relevance gain of candidate $s_i$ is
\begin{equation}
g_p(i)=\left[\rho(s_i,q_p)-\rho(s_i,q)\right]_{+}.
\label{eq:conditional_relevance_gain}
\end{equation}
This incremental form avoids repeatedly rewarding information that is already directly relevant to the original query.

\paragraph{Title bridge.}
When at least $\max\{2,\lceil n/2\rceil\}$ candidates expose a \texttt{Title: sentence} structure, we additionally require an explicit title bridge. Let $T_i$ be the non-stopword terms in the title of candidate $i$, and let $E_p$ be the corresponding terms in the bodies of the preceding evidence. If $T_i=\emptyset$, we set $a_p(i)=0$; otherwise, let $\eta_p(i)=|T_i\cap E_p|/|T_i|$ and define
\begin{equation}
a_p(i)=
\begin{cases}
1, & |T_i|=1\ \text{and}\ \eta_p(i)=1,\\
\eta_p(i), & |T_i|>1\ \text{and}\ \eta_p(i)\geq1/2,\\
0, & \text{otherwise}.
\end{cases}
\label{eq:title_bridge}
\end{equation}
Parenthetical disambiguators are removed before term matching. For inputs without a prevalent title structure, we set $a_p(i)=1$. The effective transition gain is $\gamma_p(i)=g_p(i)a_p(i)$.

\paragraph{Path score and beam search.}
For an evidence path $p=(i_1,\ldots,i_h)$, we define
\begin{equation}
C(p)=\left(r_{i_1}^{(1)}\prod_{t=2}^{h}\gamma_{(i_1,\ldots,i_{t-1})}(i_t)\right)^{1/h}.
\label{eq:multihop_path_score}
\end{equation}
\cs{The geometric mean normalizes the product for path length while retaining sensitivity to weak transitions.} We expand paths with beam search of width $b=\min\{n,\max(4,2H)\}$ for a maximum of $H$ hops, and set $b_i$ to the maximum $C(p)$ among the examined paths of length $h\geq2$ that end at $s_i$, with $b_i=0$ when no such path is examined. Substituting $b_i$ into the multi-hop relevance definition yields $r_i^{(\mathrm{MH})}\in[0,1]$. \cs{On HotpotQA we use $H=2$, giving $b=\min\{n,4\}$.}

\section{Omitted Proofs for Section \ref{sc:method}}
\label{sec:appendix_proofs}
\begin{proof}[Proof of Lemma~\ref{lem:cov}]
\cs{If $W=\emptyset$, the zero function has all the stated properties. Otherwise, $f_{\mathrm{cov}}(\emptyset)=0$, so the function is normalized.} Non-negativity and monotonicity follow directly from the definition: adding sentences never decreases vocabulary coverage, and the fraction is always in $[0, 1]$. To establish submodularity, consider any $A \subseteq B \subseteq \mathcal{V}$ and $e \notin B$. The marginal gains are $f_{\text{cov}}(A \cup \{e\}) - f_{\text{cov}}(A) = \frac{|W_e \setminus \bigcup_{\cs{s_i \in A}} W_i|}{|W|}$ and $f_{\text{cov}}(B \cup \{e\}) - f_{\text{cov}}(B) = \frac{|W_e \setminus \bigcup_{\cs{s_i \in B}} W_i|}{|W|}$. Since $A \subseteq B$ implies $\bigcup_{\cs{s_i \in A}} W_i \subseteq \bigcup_{\cs{s_i \in B}} W_i$, it follows that $W_e \setminus \bigcup_{\cs{s_i \in B}} W_i \subseteq W_e \setminus \bigcup_{\cs{s_i \in A}} W_i$. Consequently, $f_{\text{cov}}(A \cup \{e\}) - f_{\text{cov}}(A) \geq f_{\text{cov}}(B \cup \{e\}) - f_{\text{cov}}(B)$, confirming submodularity.
\end{proof}

\cs{
\begin{proof}[Proof of Lemma~\ref{lem:div}]
Let $A_S=I_d+\sum_{s_i\in S}z_i z_i^\top$. Sylvester's determinant identity gives $\det(I_{|S|}+K_S)=\det A_S$. The matrix determinant lemma therefore yields
\begin{equation}
f_{\mathrm{div}}(e\mid S)=\log\bigl(1+z_e^\top A_S^{-1}z_e\bigr)\geq0.
\end{equation}
Thus $f_{\mathrm{div}}$ is monotone and, since $f_{\mathrm{div}}(\emptyset)=0$, non-negative and normalized. If $S\subseteq T$, then $A_T\succeq A_S\succ0$, so $A_T^{-1}\preceq A_S^{-1}$. The displayed marginal consequently decreases from $S$ to $T$, establishing submodularity. This argument relies on the same positive-semidefinite Gram kernel used in the objective definition.
\end{proof}
}

\begin{proof}[Proof of Lemma~\ref{lem:rel}]
The single-hop and multi-hop relevance definitions ensure $r_i\geq0$ in both modes. Although the multi-hop score is obtained from evidence paths, that computation is completed before RGM begins, so each $r_i$ is fixed during subset selection. Hence, for any $A\subseteq B\subseteq\mathcal{V}$ and $e\notin B$, $f_{\text{rel}}(A\cup\{e\})-f_{\text{rel}}(A)=r_e=f_{\text{rel}}(B\cup\{e\})-f_{\text{rel}}(B)$. The marginal is independent of the selected set, establishing modularity; non-negativity then also implies monotonicity. \cs{The empty sum is zero, establishing normalization.}
\end{proof}

\begin{proof}[Proof of Lemma~\ref{lem:bud}]
By the definition of $\ell$, $\ell(\emptyset)=0$ and $\ell(S)=\sum_{e\in S}\lambda_{\mathrm{tok}}c_e/B\geq0$. For $A\subseteq B'$, non-negative token costs imply $\ell(A)\leq\ell(B')$, establishing monotonicity. Moreover, for any $e\notin S$, $\ell(S\cup\{e\})-\ell(S)=\lambda_{\mathrm{tok}}c_e/B$, which is independent of $S$; hence $\ell$ is modular.
\end{proof}

\begin{proof}[Proof of Theorem~\ref{thm:objective}]
By Lemmas~\ref{lem:cov}--\ref{lem:rel}, each component of $U$ is normalized, non-negative, monotone, and submodular, with relevance being modular. Non-negative linear combinations preserve these properties, so $U$ has all four properties. Lemma~\ref{lem:bud} shows that $\ell$ is normalized and modular. Subtracting a modular function preserves submodularity and normalization, hence $G=U-\ell$ is normalized and submodular. Its marginal $G(e\mid S)=U(e\mid S)-\ell_e$ can be negative, so $G$ need not be monotone or non-negative on every subset.
\end{proof}

\cs{
\begin{proof}[Proof of Theorem~\ref{thm:pc_guarantee}]
Discard candidates with $c_e>B$ and normalize the capacity to one by
writing $w_e=c_e/B$.  Keep the modular prices $\ell_e$ unchanged and
define $H=U-2\ell$.  Thus, $H$ is normalized and submodular, although
it need not be monotone or non-negative on arbitrary subsets.  RGM
selects a feasible item maximizing $H(e\mid S)/w_e$ and commits it only
when its $H$-marginal is strictly positive.

Fix any feasible comparator $O$ and let $R=H(O)=U(O)-2\ell(O)$.
The empty incumbent guarantees $G(Q)\geq0$, so there is nothing to prove
when $R\leq0$.  Assume $R>0$.

First suppose RGM terminates before any unselected item of $O$ becomes
infeasible.  At the terminal prefix $S$, every item of $O\setminus S$
fits and has $H(o\mid S)\leq0$; this is vacuous if $O\subseteq S$.
Monotonicity and submodularity of $U$ imply
\begin{align}
U(O)
&\leq U(S\cup O)
 \leq U(S)+\sum_{o\in O\setminus S}U(o\mid S)\\
&\leq U(S)+2\ell(O\setminus S).
\end{align}
Consequently,
\begin{equation}
R\leq U(S)-2\ell(O\cap S)\leq U(S).
\end{equation}
Every committed marginal of $H$ is positive, and therefore $H(S)\geq0$.
Since $G(S)=\tfrac12 U(S)+\tfrac12 H(S)$, the recorded prefix $S$
satisfies $G(Q)\geq G(S)\geq R/2$.

Otherwise, let $S\to S\cup\{a\}$ be the first committed step after
which an unselected item of $O$ becomes infeasible.  Before this step,
all items of $O\setminus S$ fit individually.  Choose a maximum-weight
item $o_1\in O\setminus S$, and write
\begin{equation}
x=w(S),\qquad w_1=w_{o_1},\qquad
d=\frac{H(a\mid S)}{w_a}>0,\qquad
q_1=G(S\cup\{o_1\}).
\end{equation}
The augmentation rule tests a set with value at least $q_1$, so
$G(Q)\geq q_1$.

The committed $H$-densities are non-increasing along the greedy path.
Indeed, an item selected later was feasible at every earlier prefix;
submodularity can only decrease its marginal, and each earlier
selection maximized the feasible marginal density.  It follows that
\begin{equation}
H(S)\geq x d.
\label{eq:rgm_prefix_certificate}
\end{equation}
This bound follows from submodularity of $H$ and the density-selection rule.

Set $T=O\setminus(S\cup\{o_1\})$.  Expanding the modular costs gives
\begin{align}
R
&\leq U(S\cup O)-2\ell(O)\\
&\leq U(S\cup\{o_1\})
 +\sum_{e\in T}U(e\mid S\cup\{o_1\})-2\ell(O)\\
&=q_1+\ell(S)-\ell_{o_1}-2\ell(O\cap S)
 +\sum_{e\in T}H(e\mid S\cup\{o_1\})\\
&\leq q_1+\ell(S)+\sum_{e\in T}H(e\mid S)\\
&\leq q_1+\ell(S)+w(T)d\\
&\leq q_1+\ell(S)+(1-w_1)d.
\label{eq:regularized_greedymax}
\end{align}
Here the fourth line uses non-negativity of the modular prices and
submodularity of $H$.  The fifth line uses the greedy rule, because
all elements of $T$ are feasible at $S$.  The last line follows from
$w(O)\leq1$ and $d>0$.

By the definition of the crossing step, some
$o\in O\setminus(S\cup\{a\})$ satisfies $x+w_a+w_o>1$.
Because $w_o\leq w_1$,
\begin{equation}
x+w_a>1-w_o\geq1-w_1.
\end{equation}
Using~\eqref{eq:rgm_prefix_certificate} and $\ell_a\geq0$, the next
recorded prefix satisfies
\begin{align}
G(S\cup\{a\})
&=H(S)+w_a d+\ell(S)+\ell_a\\
&\geq(x+w_a)d+\ell(S)\\
&\geq(1-w_1)d+\ell(S).
\end{align}
Combining this with~\eqref{eq:regularized_greedymax} yields
\begin{equation}
R\leq q_1+G(S\cup\{a\})\leq2G(Q),
\end{equation}
since RGM records both the augmentation and the next greedy prefix.
Thus $G(Q)\geq R/2=\tfrac12U(O)-\ell(O)$ in every case.

Finally, at most $\kappa$ items can be committed.  At each prefix with
at least one feasible candidate, RGM evaluates at most $n$ marginals
and reuses the same marginal in both selection rules.  A prefix with
$\kappa$ items has no feasible augmentation by the definition of
$\kappa$.  Hence there are at most $\kappa$ such scans and
$O(n\kappa)$ value-oracle queries.
\end{proof}
}

\cs{
\section{A lazy accelerated implementation of RGM}
\label{sec:fast_rgm}

The full-scan solver in Algorithm~\ref{alg:pc_submax} can be accelerated by replacing its two exhaustive scans with approximate lazy selectors. Marginal evaluations themselves remain exact; $\varepsilon$ controls the accuracy of the selection rules. This section establishes the resulting optimization guarantee; the experimental results in the paper concern the full-scan solver. Keep $n=|\mathcal V|$ and discard candidates with $c_e>B$. If none remain, return $\emptyset$ without querying $U$. Otherwise assume $c_e>0$ and write
\[
G(S)=U(S)-\ell(S),\qquad H(S)=U(S)-2\ell(S),\qquad
E(S)=\{e\in\mathcal V\setminus S:c(S)+c_e\le B\}.
\]
The proof needs only that $U$ is normalized, non-negative, monotone and submodular, and that $\ell$ is non-negative modular; in particular, it covers the aligned token prices used in PC-SubMax.

\paragraph{Selector requirements.}
For $0<\varepsilon<1/2$, put
\[
\Lambda=\max_{e:c_e\le B}[G(\{e\})]_+,
\qquad q=1-\varepsilon,\qquad
\delta=\varepsilon\Lambda,\qquad \tau=\frac{\varepsilon\Lambda}{B},
\]
where $[x]_+=\max\{x,0\}$. If $\Lambda=0$, return the empty set. Otherwise, retain the best singleton as an initial incumbent. We write $S+e=S\cup\{e\}$ and $S+\bot=S$. If $E(S)=\emptyset$, both selectors return $\bot$ and the maxima of feasible positive marginals are taken to be zero. At each prefix $S$, the augmentation selector returns $b\in E(S)\cup\{\bot\}$ satisfying
\begin{equation}
G(S+b)\ge G(S)+\max\{0,\max_{e\in E(S)}G(e\mid S)\}-\delta.
\label{eq:fast_augmentation_condition}
\end{equation}
The density selector either returns $a\in E(S)$ satisfying
\begin{equation}
\frac{H(a\mid S)}{c_a}\ge q\max_{e\in E(S)}\frac{[H(e\mid S)]_+}{c_e},
\qquad \frac{H(a\mid S)}{c_a}\ge\tau,
\label{eq:fast_density_condition}
\end{equation}
or returns $\bot$ and certifies that all feasible distorted densities are below $\tau$. As in full-scan RGM, compare $S$ and $S+b$ with the incumbent, stop if the density selector returns $\bot$, and otherwise commit $a$ to the path. Augmentation does not change the path.

\paragraph{Persistent augmentation queue.}
Initialize a maximum-priority queue with keys $u_e=[G(\{e\})]_+$. At each call, repeatedly remove selected or currently infeasible elements whenever they reach the top. If the queue is empty, return $\bot$. Otherwise pop its largest key $u_e$, evaluate $v_e=[G(e\mid S)]_+$, and let $u_{(2)}$ be the largest remaining key, or zero if none remains. If $v_e\ge u_{(2)}-\delta$, reinsert $e$ with updated key $v_e$ and return $e$ when $v_e>0$, or return $\bot$ when $v_e=0$. Otherwise reinsert it with key $v_e$ and repeat. Reinsertion on success is essential: this call only records an augmented candidate, so $e$ may be needed at a later prefix. A zero-key element can instead be discarded permanently from this queue.

\paragraph{Persistent density queue.}
Maintain a separate maximum-priority queue. Initialize the key for each element to $d_e=[H(\{e\})]_+/c_e$. Repeatedly remove selected or currently infeasible elements whenever they reach the top. If the queue is empty or its largest key is below $\tau$, return $\bot$. Otherwise pop its largest-key element $e$ and evaluate $v_e=[H(e\mid S)]_+/c_e$. If $v_e<\tau$, discard $e$ from this queue and repeat. If $v_e\ge\tau$, let $d_{(2)}$ be the largest remaining key, or zero if none remains. Return $e$ if $v_e\ge qd_{(2)}$; it is then immediately committed to $S$ and removed from both queues. Otherwise reinsert it with key $v_e$ and repeat. An element discarded because its distorted density is below $\tau$ must remain eligible for augmentation: $G(e\mid S)$ can still be positive. Only selection or infeasibility permits deletion from both queues. Break all ties by the original sentence index.

\begin{theorem}[Accelerated RGM]
\label{thm:fast_rgm}
For every $\varepsilon\in(0,1/2)$, the above implementation deterministically returns a feasible set $Q$ such that, for every feasible comparator $O$,
\begin{equation}
G(Q)\ge\left(\frac12-\varepsilon\right)U(O)-\ell(O).
\label{eq:fast_rgm_guarantee}
\end{equation}
When at least one item is individually feasible, let $c_{\min}=\min_{e:c_e\le B}c_e$. The algorithm uses
\begin{equation}
O\!\left(\frac{n}{\varepsilon}
\left[1+\log\!\left(\frac{B}{\varepsilon c_{\min}}\right)\right]\right)
\label{eq:fast_rgm_queries}
\end{equation}
value-oracle queries and $O(n)$ additional storage for the selectors and the greedy path. This storage bound excludes the embedding/kernel representation and the state needed to evaluate utility marginals.
\end{theorem}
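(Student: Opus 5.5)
The approach is to rerun the proof of Theorem~\ref{thm:pc_guarantee} with every exact inequality replaced by its approximate counterpart. I would use the normalized weights $w_e=c_e/B$, the potential $H=U-2\ell$, a fixed feasible comparator $O$, and $R=H(O)$. Since the incumbent includes the best singleton and $\emptyset$, we may assume $R>0$. The target is $G(Q)\ge(\tfrac12-\varepsilon)U(O)-\ell(O)$. It suffices to show $2G(Q)\ge R-O(\varepsilon)\bigl(U(O)+\Lambda\bigr)$ and then absorb the slack. The absorption uses $\Lambda\le G(\{o\})$ for the largest-utility item of $O$, which gives $\Lambda$-type errors at most $U(O)$ up to constants. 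If the constants do not fit, I would rescale $\varepsilon$ in $q,\delta,\tau$ by a constant.

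\textbf{Approximate certificates.} I would first establish three facts.

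\emph{Augmentation.} The lazy augmentation queue meets~\eqref{eq:fast_augmentation_condition}. Keys are stale upper bounds on $[G(e\mid S)]_+$ because $G$ is submodular. Elements dropped as infeasible stay infeasible, since $c(S)$ only grows. Hence a popped element with $v_e\ge u_{(2)}-\delta$ is within $\delta$ of the true maximum.

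\emph{Density.} The density queue similarly returns $a$ with $H(a\mid S)/c_a\ge q\cdot\max_e [H(e\mid S)]_+/c_e$ and $\ge\tau$. When it returns $\bot$, all feasible densities are below $\tau$.

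\emph{Prefix bound.} Committed densities are approximately non-increasing: a later item's density at an earlier prefix was at most $d_{\text{earlier}}/q$. This yields $H(S)\ge q\,x\,d$ for the current density $d$ of the crossing item. The argument is the same as in~\eqref{eq:rgm_prefix_certificate}, with each step's density at least $q$ times every later density.

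\textbf{Case analysis.} The two cases mirror the exact proof.

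\emph{Termination case.} If the path stops with all of $O\setminus S$ still feasible, each such item has $H(o\mid S)<\tau c_o$. Then $U(O)\le U(S)+2\ell(O\setminus S)+\tau c(O)$, so $R\le U(S)+\varepsilon\Lambda$. Together with $H(S)\ge0$ this gives $G(S)\ge(R-\varepsilon\Lambda)/2$.

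\emph{Crossing case.} At the crossing step, the chain~\eqref{eq:regularized_greedymax} becomes $R\le q_1+\ell(S)+(1-w_1)d/q+\delta$. Here $q_1$ is replaced by the recorded augmented value, which is at least $G(S\cup\{o_1\})-\delta$, because the augmentation selector's guarantee dominates the specific item $o_1$. Also $d/q$ replaces $d$, since elements of $T$ have density at most $d/q$. On the other side, $G(S\cup\{a\})\ge q(1-w_1)d+\ell(S)$. Combining the two and using $1/q\le1+2\varepsilon$ gives $R\le 2G(Q)+O(\varepsilon)\bigl((1-w_1)d+\Lambda\bigr)$. The term $(1-w_1)d$ is bounded by $G(Q)/q$.

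\textbf{Query count.} The augmentation queue re-evaluates an element only when its key drops by at least $\delta=\varepsilon\Lambda$ relative to the top. Keys are at most $\Lambda$, so each element is charged $O(1/\varepsilon)$ decreases, plus one successful pop per prefix. The density queue's failed pops decrease a key by a factor $q$. Keys range from at most $\Lambda/c_{\min}$ down to $\tau=\varepsilon\Lambda/B$, so each element has $O(\varepsilon^{-1}\log(B/(\varepsilon c_{\min})))$ decreases before discard. Successes are at most $n$ commits. Summing over elements gives~\eqref{eq:fast_rgm_queries}.

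\textbf{Main obstacle.} I expect the hardest step to be making the error terms scale with $U(O)$ rather than with $G(Q)$ or $\Lambda$ loosely, especially in the crossing case. There the factor $1/q$ multiplies a density term that need not be bounded by $U(O)$. My first attempt would bound $(1-w_1)d\le G(S\cup\{a\})/q\le G(Q)/q$ and move it to the left side, obtaining $(2+O(\varepsilon))G(Q)\ge R-\varepsilon\Lambda$. This rearranges to the $(\tfrac12-O(\varepsilon))$ coefficient on $U(O)$ while keeping the coefficient of $\ell(O)$ equal to $1$.
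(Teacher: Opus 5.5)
Your architecture is the paper's. You use the same lazy-queue accounting, the prefix certificate $H(S)\ge qxd$, and the termination/crossing split inherited from the exact RGM proof. The genuine gap is in absorbing the additive errors $\tau c(O)\le\varepsilon\Lambda$ and $\delta=\varepsilon\Lambda$.

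You justify the absorption by ``$\Lambda\le G(\{o\})$ for the largest-utility item of $O$'', but this inequality runs the other way. Every $o\in O$ is individually feasible, so $\Lambda=\max_{e:c_e\le B}[G(\{e\})]_+\ge G(\{o\})$. Moreover, $\Lambda$ can exceed $U(O)$ by an arbitrary factor: one item outside $O$ with singleton value $M\gg U(O)$ makes your termination bound $G(S)\ge(R-\varepsilon\Lambda)/2$ vacuous. Rescaling $\varepsilon$ cannot fix this, because the error is not proportional to $U(O)$.

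The missing step is a case split that uses the retained best-singleton incumbent, exactly as the paper does. If $\Lambda>U(O)$, then $G(Q)\ge\Lambda>U(O)\ge(\tfrac12-\varepsilon)U(O)-\ell(O)$ outright. Otherwise $\Lambda\le U(O)$, and both error terms are at most $\varepsilon U(O)$. Your reduction to $R>0$ needs only the empty incumbent. Note also that $R>0$ rules out the early return at $\Lambda=0$, since then $G(O)\le\sum_{o\in O}G(\{o\})\le0$.

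Second, the theorem is stated for the fixed parameters $q=1-\varepsilon$, $\delta=\varepsilon\Lambda$, $\tau=\varepsilon\Lambda/B$. ``Rescale $\varepsilon$ if the constants do not fit'' would therefore prove a claim about a different implementation. With the linearization $1/q\le1+2\varepsilon$, the constants indeed do not fit. Your chain gives $R\le(2+3\varepsilon/q)G(Q)+\varepsilon\Lambda$, whose $U(O)$-coefficient (under $\Lambda\le U(O)$) is $(1-\varepsilon)^2/(2+\varepsilon)$. That is below $\tfrac12-\varepsilon$ for $\varepsilon<1/4$.

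Your two crossing inequalities do close exactly if you eliminate the density term without linearizing. From $G(Q)\ge q(1-w_1)d+\ell(S)$ you get $(1-w_1)d/q\le(G(Q)-\ell(S))/q^2$. Substituting this into $R\le G(Q)+\delta+\ell(S)+(1-w_1)d/q$ gives $R\le(1+\beta^{-1})G(Q)+\delta$ with $\beta=q^2$. Under $\Lambda\le U(O)$ this yields $G(Q)\ge\frac{\beta(1-\varepsilon)}{1+\beta}U(O)-\frac{2\beta}{1+\beta}\ell(O)$. Here $\frac{\beta(1-\varepsilon)}{1+\beta}-(\tfrac12-\varepsilon)=\frac{\varepsilon^2}{2(1+\beta)}\ge0$ and $\frac{2\beta}{1+\beta}\le1$. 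The termination case gives $(\tfrac12-\tfrac{\varepsilon}{2})U(O)-\ell(O)$.

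This computation is the paper's scalar estimate $\beta[H(O)-T-\delta]\ge T$. The paper applies it contrapositively at the last prefix of cost at most $c(O)-c_{o^\star}$; your first crossing step with the heaviest item of $O\setminus S$ is an equivalent choice. You do not address the $O(n)$ storage claim, but it is immediate from the two heaps and the path.
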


\paragraph{Validity and query complexity of the selectors.}
Both $G$ and $H$ are submodular. Their positive-part marginals therefore do not increase as the prefix grows, so all cached keys remain upper bounds. On a successful augmentation refresh, every other current positive marginal is at most $u_{(2)}\le v_e+\delta$, proving~\eqref{eq:fast_augmentation_condition}. On a failed refresh, $v_e<u_{(2)}-\delta\le u_e-\delta$. Keys start at most $\Lambda$ and remain non-negative, so each element incurs at most $O(1/\varepsilon)$ failed augmentation refreshes. There is at most one successful refresh per prefix and at most $n+1$ prefixes, yielding $O(n/\varepsilon+n)$ augmentation queries in total. Successful candidates remain in the queue for later calls.

A returned density candidate has true density at least $qd_{(2)}$ and $\tau$, proving~\eqref{eq:fast_density_condition}. Every permanently discarded density key has true density below $\tau$ forever, which also proves the stopping certificate. On a failed density refresh, $v_e<qd_{(2)}\le qd_e$. Moreover,
\[
d_e^{(0)}=\frac{[H(\{e\})]_+}{c_e}\le\frac{\Lambda}{c_e}.
\]
Before an element drops below the floor, it can thus incur only
\[
O\!\left(\frac1\varepsilon\log\frac{B}{\varepsilon c_e}\right)
\]
failed density refreshes. Each element is successfully committed or permanently discarded at most once. Initialization and incumbent-value bookkeeping require only $O(n)$ further queries, giving~\eqref{eq:fast_rgm_queries}. Standard binary heaps add a factor $O(\log n)$ to the queue-operation count; this is separate from the number and cost of oracle evaluations.

\paragraph{Approximation analysis.}
Write $F=U(O)$, $L=\ell(O)$ and
\[
T=\left(\frac12-\varepsilon\right)F-L,\qquad \beta=q^2.
\]
The incumbent includes the empty set, so $T\le0$ is immediate. Assume $T>0$, which implies that $O$ is nonempty. If $\Lambda>F$, the retained best singleton already gives $G(Q)\ge\Lambda>F\ge T$. It therefore remains to consider
\begin{equation}
\Lambda\le F.
\label{eq:fast_singleton_bound}
\end{equation}
If $\Lambda=0$, normalized submodularity gives $G(A)\le\sum_{e\in A}G(\{e\})\le0$ for every $A$, contradicting $T>0$. Hence $\Lambda>0$. Every committed item has positive distorted marginal, so every path prefix $S$ satisfies $H(S)\ge0$.

Let $C=c(O)$, choose a heaviest comparator item $o^\star\in O$, and write $r=c_{o^\star}$. At any prefix of cost at most $C-r$, every unselected item of $O$ is feasible. If the path terminates at such a prefix $S$, the stopping certificate and monotone submodularity imply
\begin{align*}
F-U(S)
&\le\sum_{o\in O\setminus S}U(o\mid S)\\
&=\sum_{o\in O\setminus S}\bigl(H(o\mid S)+2\ell_o\bigr)
\le\tau C+2L\le\varepsilon\Lambda+2L.
\end{align*}
Consequently,
\[
G(S)=\tfrac12[U(S)+H(S)]\ge\tfrac12F-L-\tfrac12\varepsilon\Lambda\ge T.
\]

Otherwise let $P$ be the last prefix with $x=c(P)\le C-r$, and let $a$ be the next committed item, so $x+c_a>C-r$. Put
\[
d=H(a\mid P)/c_a,\qquad D=d/q,\qquad
L_{\mathrm{in}}=\ell(P\cap O),\qquad L_{\mathrm{out}}=\ell(P\setminus O).
\]
Because every unselected comparator item fits at $P$,~\eqref{eq:fast_density_condition} gives
\begin{equation}
H(o\mid P)\le Dc_o\quad(o\in O\setminus P).
\label{eq:fast_opt_density}
\end{equation}
The future item $a$ was feasible at every earlier prefix, where its distorted density was at least $d$ by submodularity. Thus every earlier selected density was at least $qd$, yielding $H(P)\ge qdx$. Since the last selected density is $d\ge qd$, we obtain
\begin{equation}
H(P+a)\ge \beta D(x+c_a)>\beta D(C-r).
\label{eq:fast_prefix_certificate}
\end{equation}
We also need the scalar estimate
\begin{equation}
\beta[H(O)-T-\delta]\ge T.
\label{eq:fast_scalar}
\end{equation}
Indeed, ~\eqref{eq:fast_singleton_bound} implies $H(O)-T-\delta\ge F/2-L$, and hence the left-hand side minus $T$ is at least
\[
\left(\frac\beta2-\frac12+\varepsilon\right)F+(1-\beta)L
=\frac{\varepsilon^2}{2}F+(1-\beta)L\ge0.
\]

If $o^\star\notin P$, ~\eqref{eq:fast_opt_density}, submodularity of $H$, and monotonicity of $U$ imply
\begin{align*}
H(o^\star\mid P)+(C-r)D
&\ge\sum_{o\in O\setminus P}H(o\mid P)\\
&\ge H(O\setminus P\mid P)\\
&\ge F-U(P)-2(L-L_{\mathrm{in}}).
\end{align*}
Rearranging gives
\[
G(P+o^\star)+(C-r)D
\ge H(O)+\ell_{o^\star}+L_{\mathrm{in}}-L_{\mathrm{out}}.
\]
Let $A_P$ be the recorded augmentation at $P$.~\eqref{eq:fast_augmentation_condition} gives $G(A_P)\ge G(P+o^\star)-\delta$. If $G(A_P)\ge T$, the incumbent suffices. Otherwise,
\[
(C-r)D>H(O)-T-\delta+\ell_{o^\star}+L_{\mathrm{in}}-L_{\mathrm{out}}.
\]
Together with~\eqref{eq:fast_prefix_certificate}--\eqref{eq:fast_scalar}, this yields
\begin{align*}
G(P+a)
&=H(P+a)+L_{\mathrm{in}}+L_{\mathrm{out}}+\ell_a\\
&>\beta[H(O)-T-\delta]
+\beta\ell_{o^\star}+(1+\beta)L_{\mathrm{in}}\\
&\quad +(1-\beta)L_{\mathrm{out}}+\ell_a\ge T.
\end{align*}
If $o^\star\in P$, then $c(O\setminus P)\le C-r$, and the same argument without the separate $o^\star$ term gives
\[
G(P)+(C-r)D\ge H(O)+L_{\mathrm{in}}-L_{\mathrm{out}}.
\]
Either $G(P)\ge T$, or~\eqref{eq:fast_prefix_certificate}--\eqref{eq:fast_scalar} again give $G(P+a)>T$. Finally, when $C=r$, positive costs make $O$ a singleton and the initial best-singleton incumbent already achieves $G(O)\ge T$. All sets used in the argument are recorded by the algorithm and remain feasible, completing the proof.

\paragraph{Interpretation for PC-SubMax.}
The accelerated bound replaces the $\kappa$ factor by $\varepsilon^{-1}[1+\log(B/(\varepsilon c_{\min}))]$, trading $\varepsilon$ in the utility coefficient for reduced query complexity when this factor is smaller. Its total implementation cost also includes preprocessing, marginal evaluation, and priority-queue operations. The reported experiments use full-scan RGM.
}

\section{\cs{Detailed Computational Complexity Analysis}}
\label{sec:appendix_complexity}

We provide a comprehensive breakdown of the computational complexity
of Algorithm~\ref{alg:pc_submax}. Let $n$ denote the number of
candidate sentences, $\kappa$ the maximum number of sentences that can
fit within the token budget $B$, $d$ the embedding dimension, and
$\bar{L}$ the average number of unique words per sentence.

\subsection{Preprocessing Stage}

\cs{Let $T_{\mathrm{text}}$ denote the cost of sentence segmentation, token-cost computation, vocabulary construction, and any title-term preprocessing. This includes scanning word occurrences in the input, whose number can exceed the number of unique words. Let $T_{\mathrm{enc,all}}$ denote the total cost of encoding and normalizing the candidate sentences and any base query. Encodings may be processed in batches; the operation count is distinct from measured GPU latency. Under constant-time dictionary operations, the resulting vocabulary sets occupy $O(n\bar L)$ storage.}

\paragraph{Relevance computation.}
\cs{Single-hop similarities require $O(nd)$ operations once normalized embeddings are available. For multi-hop relevance, at most $(H-1)b$ retained path prefixes produce augmented queries. Let $T_q^{(H)}$ bound the cost of constructing and encoding one augmented query and forming the corresponding prefix-term set. Computing similarities and title matches for that prefix costs $O(nd+n\bar L)$. Maintaining the best $b$ extensions per hop can be implemented in $O(bn\log(1+b))$ operations. The additional multi-hop cost is therefore
\begin{equation}
T_{\mathrm{MH}}=
O\!\left((H-1)b(T_q^{(H)}+nd+n\bar L)
          +Hbn\log(1+b)\right).
\end{equation}
All relevance scores are stored before subset selection. In single-hop mode, $T_{\mathrm{MH}}=0$. With $H=2$ and $b=\min\{n,4\}$, the hop limit and beam width are bounded.}

\paragraph{Dense kernel construction.}
\cs{Let $Z\in\mathbb R^{n\times d}$ contain the unit-normalized sentence embeddings. Computing $K=ZZ^\top$ costs $O(n^2d)$ operations and requires $O(n^2)$ storage. Following the main text, $T_{\mathrm{prep}}$ excludes this matrix multiplication and satisfies
\begin{equation}
T_{\mathrm{prep}}=
O(T_{\mathrm{text}}+T_{\mathrm{enc,all}}+nd+T_{\mathrm{MH}}).
\end{equation}
The total preprocessing cost including the dense kernel is
$O(T_{\mathrm{prep}}+n^2d)$.}

\subsection{Selection Stage: Per-Query Oracle Cost}
At each RGM prefix, we evaluate $U(e\mid S)$ once for every feasible candidate $e$ and reuse it in both the augmentation and cost-scaled density criteria. By the definition of $U$, this marginal decomposes into coverage, diversity, and relevance. The token-price marginal $\ell_e$ is then subtracted once for the augmentation score or twice for the path score.

\paragraph{(a) Coverage Marginal Gain: \cs{$O(|W_e|)$ per candidate.}}
We maintain a global hash set $C = \bigcup_{\cs{s_i \in S}} W_i$ that 
tracks the vocabulary currently covered by the selected subset $S$. 
For each candidate $e$, the marginal gain is:
\begin{equation}
\Delta_e f_{\text{cov}}(S) 
= \frac{|W_e \setminus C|}{|W|},
\end{equation}
which requires iterating over the words in $W_e$ and checking 
membership in $C$. \cs{Under constant-time dictionary membership operations,} 
\cs{this costs $O(|W_e|)$ for candidate $e$, and updating the union after accepting it also costs $O(|W_e|)$. A complete scan costs $O(\sum_e |W_e|)=O(n\bar L)$, and at most $\kappa$ nonempty scans give $O(n\kappa\bar L)$ for $\kappa\geq1$. If $W=\emptyset$, all coverage marginals are zero.}

\paragraph{(b) Diversity Marginal Gain: $O(j^2)$ per candidate via 
Incremental Cholesky.}
\cs{The cost of this term grows with the number of selected sentences.} At iteration $j$ (where $|S| = j$), we maintain the 
Cholesky factorization $K_S + I = LL^\top$, where 
$L \in \mathbb{R}^{j \times j}$ is lower triangular. For a candidate 
$e$, the augmented matrix can be written in block form:
\begin{equation}
K_{S \cup \{e\}} + I 
= \begin{pmatrix} 
K_S + I & \mathbf{k}_e \\ 
\mathbf{k}_e^\top & K_{ee} + 1 
\end{pmatrix},
\end{equation}
where $\mathbf{k}_e = (K_{e,s})_{s \in S} \in \mathbb{R}^{j}$ is 
the column of similarities between $e$ and the elements currently in 
$S$. By the Schur complement formula:
\begin{equation}
\det(K_{S \cup \{e\}} + I) 
= \det(K_S + I) \cdot \sigma_e^2,
\end{equation}
where the Schur complement is:
\begin{equation}
\sigma_e^2 
= K_{ee} + 1 
- \mathbf{k}_e^\top (K_S + I)^{-1} \mathbf{k}_e.
\end{equation}
Taking logarithms on both sides yields:
\begin{equation}
\Delta_e f_{\text{div}}(S) 
= \log\det(K_{S \cup \{e\}} + I) - \log\det(K_S + I) 
= \log \sigma_e^2.
\label{eq:marginal_gain}
\end{equation}

To compute $\sigma_e^2$ efficiently, we exploit the existing Cholesky 
factor $L$. Solving the triangular system $L\mathbf{z} = \mathbf{k}_e$ 
via forward substitution costs $O(j^2)$, after which:
\begin{equation}
\sigma_e^2 = K_{ee} + 1 - \|\mathbf{z}\|^2,
\label{eq:sigma_e} 
\end{equation}
where the inner product $\|\mathbf{z}\|^2$ is computed in $O(j)$. 
\cs{Thus a diversity marginal costs $O(1+j^2)$, including the empty-prefix case, compared with $O((j+1)^3)$ for recomputing the augmented determinant.}

When a sentence $e$ is actually added to $S$, the Cholesky factor is 
updated by appending a new row:
\begin{equation}
L' = \begin{pmatrix} 
L & \mathbf{0} \\ 
\mathbf{z}^\top & \sigma_e 
\end{pmatrix},
\end{equation}
where $\mathbf{z}$ is already computed from the evaluation step and 
$\sigma_e = \sqrt{\sigma_e^2}$. \cs{With preallocated factor storage and the selected candidate's triangular-solve result cached, appending the row costs $O(j+1)$.}

\paragraph{(c) Relevance Marginal Gain: $O(1)$ per candidate.}
Since $f_{\text{rel}}$ is modular, the marginal gain is simply the 
precomputed relevance score:
\begin{equation}
\Delta_e f_{\text{rel}}(S) = r_e,
\end{equation}
which is retrieved from the lookup table in $O(1)$ time.

\paragraph{(d) Token-Price Marginal: $O(1)$ per candidate.}
The token price $\ell$ is modular, and its marginal is
\begin{equation}
\ell(e\mid S)=\ell_e=\lambda_{\mathrm{tok}}\frac{c_e}{B}.
\end{equation}
The value is precomputed for each sentence, so both $U(e\mid S)-\ell_e$
and $(U(e\mid S)-2\ell_e)/c_e$ add only $O(1)$ arithmetic after the
shared utility-marginal evaluation.

\paragraph{Total Per-Query Cost.}

Combining all components, a single exact marginal evaluation at
prefix size $j$ costs
\begin{equation}
\cs{O(|W_e|)+O(1+j^2)+O(1)=O(1+|W_e|+j^2).}
\end{equation}
\cs{The triangular-solve cost grows quadratically with $j$, while the coverage cost depends on $|W_e|$; relevance and token price are constant-time lookups.}

\subsection{Total Selection Cost}

\cs{If no item is individually feasible, RGM returns the empty set after $O(n)$ cost checks. Otherwise $\kappa\geq1$. Every nonempty candidate scan occurs at a prefix size $j\leq\kappa-1$, since a feasible set of size $\kappa$ cannot be augmented. Let $E_j$ be the feasible candidates at prefix size $j$, taking $E_j=\emptyset$ if that prefix is not reached. The selection cost is bounded by
\begin{align}
\sum_{j=0}^{\kappa-1}\sum_{e\in E_j}O(1+|W_e|+j^2)
&\leq \sum_{j=0}^{\kappa-1}O(n+n\bar L+nj^2)\\
&=O(n\kappa\bar L+n\kappa^3).
\end{align}
Coverage-union updates, factor updates, feasibility checks, and incumbent bookkeeping fit within this bound. Both selection criteria reuse the same utility marginal for a candidate.}

\subsection{Overall Complexity}

\cs{Adding preprocessing gives the runtime stated in Theorem~\ref{thm:complexity}:
\begin{equation}
O(T_{\mathrm{prep}}+n^2d+n\kappa\bar L+n\kappa^3).
\end{equation}
With bounded sentence lengths and fixed encoder configuration, encoder input limits, hop limit, and beam width, this simplifies to $O(n^2d+n\kappa^3)$.}

\paragraph{Scaling and memory.}
\cs{The parameter $\kappa$ is the maximum feasible cardinality. With unequal sentence costs, $\kappa/n$ can differ from the requested token-retention ratio. If $\kappa=\Theta(n)$, the dense triangular-solve implementation has a selection upper bound of $O(n^4)$. Its auxiliary storage for the kernel, embeddings, vocabulary sets, and Cholesky factor is $O(n^2+nd+n\bar L+\kappa^2)$, excluding the raw text and encoder state.}

\cs{An alternative exact representation follows from Sylvester's determinant identity:
\begin{equation}
\det(I_{|S|}+Z_SZ_S^\top)=\det(I_d+Z_S^\top Z_S).
\end{equation}
Maintaining the $d\times d$ matrix on the right permits exact diversity-marginal evaluation in embedding space and can avoid storing the dense $n\times n$ kernel. This offers a complementary implementation option when the candidate set is large relative to $d$. Approximate kernels offer another direction for scaling, with optimization guarantees stated for the corresponding approximate utility.}

\end{document}